\def\eqref#1{equation~\ref{#1}}
\def\1{\bm{1}}
\newcommand{\tcen}[1]{\multicolumn{1}{c}{#1}}
\definecolor{royalblue}{rgb}{0.06, 0.75, 0.99}
 \newtheorem{lemma}{Lemma}
 \newtheorem{proposition}{Proposition}
\definecolor{mygray}{gray}{0.95}
\def\eqref#1{\ref{#1}}
\def\1{\bm{1}}
\DeclareMathAlphabet{\mathsfit}{\encodingdefault}{\sfdefault}{m}{sl}
\SetMathAlphabet{\mathsfit}{bold}{\encodingdefault}{\sfdefault}{bx}{n}
\newcommand{\E}{\mathbb{E}}
\newcommand{\cU}{{\mathcal U}}
\newcommand{\bbE}{{\mathbb E}}
\newcommand{\EE}{{\bbE}}
\newcommand{\R}{\mathbb{R}}
\definecolor{linkcolor}{RGB}{74, 102, 146}
\renewcommand{\eqref}[1]{(\ref{#1})}
\title{An Efficient On-Policy Deep Learning Framework for Stochastic Optimal Control}
\author{Mengjian Hua \\
NYU-ECNU Institute of Mathematical Sciences\\
NYU Shanghai\\
Shanghai, China 200124\\
\texttt{mh5113@nyu.edu} \\
\And
Mathieu Lauri{\`e}re\\
Shanghai Frontiers Science Center of AI and DL\\
NYU-ECNU Institute of Mathematical Sciences\\
NYU Shanghai\\
Shanghai, China 200124\\
\texttt{mathieu.lauriere@nyu.edu} \\
\And
Eric Vanden-Eijnden   \\
Courant Institute of Mathematical Sciences \\
New York University \\
New York, NY 10012, USA \\
\texttt{eve2@cims.nyu.edu} \\
}
\begin{document}

\maketitle

\vspace{-.5cm}
\begin{abstract}
We present a novel on-policy algorithm for solving stochastic optimal control (SOC) problems. By leveraging the Girsanov theorem, our method directly computes on-policy gradients of the SOC objective without expensive backpropagation through stochastic differential equations or adjoint problem solutions. This approach significantly accelerates the optimization of neural network control policies while scaling efficiently to high-dimensional problems and long time horizons. We evaluate our method on classical SOC benchmarks as well as applications to sampling from unnormalized distributions via Schr\"odinger-F\"ollmer processes and fine-tuning pre-trained diffusion models. Experimental results demonstrate substantial improvements in both computational speed and memory efficiency compared to existing approaches.
\end{abstract}

\section{Introduction}

Stochastic Optimal Control (SOC) problems~\citep{mortensen1989stochastic,fleming2012deterministic} arise across sciences and engineering, from traditional domains like finance and economics~\citep{pham2009continuous,FLEMING2004979,aghion1992model} and robotics~\citep{theodorou2011iterative,pavlov2018narrow} to emerging applications in sampling complex distributions and simulating rare events~\citep{zhang2022path,holdijk2023stochasticoptimalcontrolcollective,hartmann2013characterization,HartmannCarsten,ribeira2024improving}. Their goal is to optimize a cost function by adding an adjustable drift (the control) to a reference stochastic differential equation (SDE).

While low-dimensional SOC problems can be solved using standard numerical methods for the Hamilton-Jacobi-Bellman equation, these approaches fail in high dimensions. This has motivated recent deep learning (DL) solutions~\citep{han2016deep,han2018solving,hure2020deep,domingo2023stochastic,germain2021neural,hu2024recent} that parameterize the control using neural networks and optimize it via stochastic gradient descent on the SOC objective, evaluated using controlled SDE solutions. This Neural SDE approach~\citep{tzen2019neural,li2020scalable}, though conceptually simple, requires differentiating through SDE solutions—making it computationally expensive and limiting scalability.

One alternative uses the Girsanov theorem to compute the SOC objective via expectations over a reference process with a control independent from the one being optimized. However, this introduces an exponential weighing factor whose high variance (when the reference control differs significantly from the actual control) again limits scalability.

We propose a new approach based on expressing the gradient of the SOC objective exactly through expectations over the controlled process (on-policy evaluation) without differentiating through process solutions (simulation-free). This result, first derived by~\citet{yang1991monte}, avoids exponential weighing factors. We show this gradient can be computed via automatic differentiation of an alternative objective by selectively detaching parameters from the computational graph.

Specifically, our \textbf{main contributions} are:
\begin{itemize}[leftmargin=0.15in,noitemsep,topsep=0pt]
\item We propose an on-policy algorithm for solving generic SOC problems via deep learning that scales to scenarios where Neural SDE methods are computationally intractable. Our approach uses on-policy evaluation of gradients without requiring differentiation through SDE solutions.
\item We show how to apply our approach to construct F\"ollmer processes between a point mass and an unnormalized target distribution, enabling sampling from this target and computing its normalization constant.
\item We also show how to fine-tune diffusion-based generative models by solving an SOC problem that transforms an initial sampling SDE into one that samples from a tilted distribution weighted by a reward function.
\item Through numerical experiments, we demonstrate significant reductions in computational time and memory usage compared to methods requiring SDE solution differentiation, such as Neural SDE approaches.
\end{itemize}

\subsection{Related Work}
\label{sec:related}

Deep learning approaches to SOC have evolved along several directions. \citet{han2016deep} pioneered learning feedback control functions for high-dimensional problems, inspiring algorithms for backward SDEs and PDEs~\citep{han2018solving}. Alternative approaches include the algorithms proposed in~\citet{ji2020three}, dynamic programming methods~\citep{hure2021deep,hure2020deep}, and stochastic optimal control matching~\citep{domingo2023stochastic} based on iterative diffusion optimization~\citep{nüsken2023solvinghighdimensionalhamiltonjacobibellmanpdes}. These methods either use off-policy learning with high-variance estimators or require costly differentiation through SDE solutions.

The gradient formula we use was originally proposed in~\citet{yang1991monte} in the context of sensitivity analysis in finance~\citep{pham2009continuous,FLEMING2004979,aghion1992model}, with generalizations in~\citet{gobet2005sensitivity}. While referenced in recent DL works~\citep{mohamed2020monte,li2020scalable,lie2021frechet,domingo2023stochastic,ribeira2024improving,domingoenrich2024taxonomy}, it has not been fully exploited algorithmically. Our approach also connects to Reinforcement Learning~\citep{quer2024connecting,domingo2024adjointmatching,domingoenrich2024taxonomy}, resembling a continuous-time version of the REINFORCE algorithm~\citep{williams1992simple,williams1988towards,sutton1999policy}.

For F\"ollmer processes~\citep{follmer1986time}, several deep learning methods have been proposed~\citep{huang2021schrodinger,jiao2021convergence,vargas2023bayesian}. The Path Integral Sampler (PIS)~\citep{zhang2022path} is closest to our approach, as it performs on-policy minimization of the same SOC objective. However, PIS requires differentiating through controlled processes, while our approach avoids this costly step.

Generative models based on diffusion can be fine-tuned by modifying their drifts based on reward functions~\citep{fan2024reinforcement,clark2024directly,uehara2024finetuning}. This task can be formulated as a SOC problem~\citep{domingo2024adjointmatching}. Our method offers an efficient solution when the base distribution is a point-mass.

\section{Methods}
\label{sec:method}

\subsection{Problem setup} 
\label{sec:setup}

We consider the stochastic optimal control (SOC) problem:
\begin{equation}
    \label{eq:obj}
    \min_{u \in \mathcal{U}} J(u)\qquad \text{with}\qquad  J(u) = \mathbb{E} [\mathcal{J}(u,X^u)], 
\end{equation}
in which, for a generic process $X = (X_t)_{t \in [0,T]}$, 
\begin{equation}
\label{eq:obj:jj}
    \mathcal{J}(u,X) = \int_{0}^T \left(\tfrac{1}{2} | u_t(X_t)|^2 + f_t(X_t)  \right) dt + g(X_T),
\end{equation}
and $X^u=(X_t^u)_{t\in[0,T]}$ is the solution to the SDE
\begin{equation}
\label{eq:sde}
    d X_t^u = \left( b_t(X_t^u) + \sigma_t  u_t(X_t^u) \right) dt +  \sigma_t d W_t, \quad X_0^u \sim \mu_0.
\end{equation}
In these equations,  $X^u_t\in \R^d$ is the system state, $\E$ denotes expectation over the law of $X^u$, $u : [0,T]\times \R^d \to \R^d$ is a closed-loop Markovian control that belongs to some set $\mathcal{U}$ of admissible controls to be specified later, $f: [0,T]\times \R^d \to \R^d$ is the state cost,  $g:\R^d \to \R^d$ is the terminal cost, $b:[0,T]\times \R^d \to \R^d$ is the base drift,  $\sigma : [0, T ] \to\R^d\times \R^d$ is the  volatility matrix, which we assume invertible and independent of the state~$X^u_t$, $(W_t)_{t\in [0,T]}$ is a Wiener process taking values in~$\R^d$, and $\mu_0$ is some probability distribution on $\R^d$ for the initial state. 

We are interested in solving~\eqref{eq:obj} in situations where the set of admissible controls $\mathcal{U}$ is a rich parametric class, for example made of deep neural networks (DNN). We denote functions in the class by $u^\theta$, where $\theta\in \Theta$ collectively denotes the parameters to be adjusted, e.g. the weights if we use a DNN. 

\subsection{Reformulation with Girsanov theorem}
\label{sec:grisa}

The key challenge in solving the SOC problem~\eqref{eq:obj} is that a vanilla calculation of the objective's gradient requires to differentiate $X^u$ since it depends on the control $u$:  this requires costly back-propagation through the solutions of the SDE~\eqref{eq:sde}. We propose a method to compute the gradient of~\eqref{eq:obj} that avoids this expensive step and refer to it as a \emph{simulation-free} method. 

To this end, we first use the Girsanov theorem to reformulate the problem so that the control $u$ appears explicitly in the objective:
\begin{lemma}
    \label{th:girs}
    Given a reference control $v \in \cU$, the objective in~\eqref{eq:obj} can be expressed as
\begin{equation}
\label{eq:obj:G}
    \begin{aligned}
        J(u) &= \mathbb{E}\left[\mathcal{J}(u,X^v) M (u,v)\right],
    \end{aligned}
\end{equation}
where $X^v=(X^v_t)_{t\in[0,T]}$ solves the SDE~\eqref{eq:sde} with $u$ replaced by $v$, $M(u,v)$ is the Girsanov factor
\begin{equation}
\label{eq:gfact}
\begin{aligned}
    M (u,v) =  \exp\Biggl( &- \int_{0}^T \left(v_t(X_t^v) - u_t(X_t^v)\right) \cdot d W_t - \frac{1}{2} \int_{0}^T \left|v_t(X_t^v) - u_t(X_t^v)\right|^2 dt  \Biggr),
\end{aligned}
\end{equation}
and the expectation $\E$ in~\eqref{eq:obj:G} is taken over the law of $X^v$.
\end{lemma}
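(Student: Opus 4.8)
The plan is to realize both $X^u$ and $X^v$ as the canonical process on path space $C([0,T];\R^d)$ under two mutually absolutely continuous measures, and to identify the Girsanov factor $M(u,v)$ with the Radon--Nikodym derivative between them. Write $\bbP^u$ and $\bbP^v$ for the laws on path space of the solutions of~\eqref{eq:sde} driven by $u$ and by $v$ respectively, both started from $\mu_0$. The observation that makes the statement work is that the integrand defining $J$ depends on the control $u$ only through the fixed deterministic functions $u_t(\cdot)$ and $f_t(\cdot)$, not through the law of the process. Letting
\[
\Phi(X) = \int_0^T\left(\tfrac12|u_t(X_t)|^2 + f_t(X_t)\right)dt + g(X_T),
\]
the quantity $\Phi$ is one and the same path functional whether we evaluate it along $X^u$ or along $X^v$. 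Hence $J(u) = \E_{X^u}[\Phi(X^u)] = \E_{\bbP^u}[\Phi]$, and the whole task reduces to rewriting this expectation under $\bbP^v$.

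First I would apply Girsanov's theorem. Since $X^u$ and $X^v$ solve~\eqref{eq:sde} with the same invertible volatility $\sigma_t$ and the same initial law, their drifts differ only by $\sigma_t(u_t - v_t)$; multiplying by $\sigma_t^{-1}$, the relevant Girsanov exponent involves precisely $u_t - v_t$. Concretely, if $(W_t)$ denotes the Wiener process driving the reference SDE for $X^v$, Girsanov's theorem gives
\[
\frac{d\bbP^u}{d\bbP^v}\bigg|_{\mathcal F_T} = \exp\left(\int_0^T (u_t(X_t^v) - v_t(X_t^v))\cdot dW_t - \tfrac12\int_0^T |u_t(X_t^v) - v_t(X_t^v)|^2\,dt\right).
\]
Using $u_t - v_t = -(v_t - u_t)$ and $|u_t - v_t|^2 = |v_t - u_t|^2$, this exponent is exactly $M(u,v)$ as defined in~\eqref{eq:gfact}. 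The change-of-measure identity $\E_{\bbP^u}[\Phi] = \E_{\bbP^v}[\Phi\, \tfrac{d\bbP^u}{d\bbP^v}]$ then yields~\eqref{eq:obj:G} once we rewrite $\E_{\bbP^v}$ as $\E_{X^v}$ and substitute $X_t = X_t^v$ inside $\Phi$.

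The main obstacle, and the only genuinely nontrivial point, is justifying the change of measure rather than the algebra: one must ensure that $\bbP^u$ and $\bbP^v$ are mutually absolutely continuous, equivalently that the exponential $M(u,v)$ is a \emph{true} martingale with $\E_{\bbP^v}[M(u,v)] = 1$ rather than a mere supermartingale. This is where the admissibility assumptions on $\cU$ enter: it suffices that $u - v$ satisfy a Novikov-type integrability condition, $\E_{\bbP^v}[\exp(\tfrac12\int_0^T |u_t(X_t^v) - v_t(X_t^v)|^2\,dt)] < \infty$, which I would take to be built into the definition of the admissible class (for instance controls of suitable linear growth guaranteeing non-explosion and finite exponential moments). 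Granting this, the stochastic integral against $dW_t$ is well defined, Girsanov applies verbatim, and all the expectations above are finite; the remainder is the bookkeeping carried out above. A complementary point worth verifying is that the same hypotheses make $\E_{\bbP^u}[\Phi]$ itself finite, so that the equality of~\eqref{eq:obj} and~\eqref{eq:obj:G} is an identity between finite quantities and not merely a formal manipulation.
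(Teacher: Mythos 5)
Your proof is correct and follows the same route as the paper, which simply invokes the Girsanov change of measure between the laws of $X^u$ and $X^v$; you have filled in the details (the explicit Radon--Nikodym derivative, the sign bookkeeping, and the Novikov-type condition ensuring the exponential is a true martingale) that the paper leaves implicit.
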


The result follows directly from the Girsanov change of measure formula between the law of $X^u$ and $X^v$. For a proof, see e.g. \citet{karatzas1991brownian}.

Expression~\eqref{eq:obj:G} presents $J(u)$ as an off-policy objective, making $u$ explicit since the process $X^v$ is independent of this control. This eliminates the need to differentiate through state process trajectories when computing gradients. However, empirically evaluating $J(u)$ and its gradient using~\eqref{eq:obj:G} with finite samples from $X^v$ yields an estimator whose variance depends heavily on the reference control $v$. This suggests keeping $v$ close to $u$. Next we show that we can use~\eqref{eq:obj:G} to evaluate the gradient using the controlled process itself.

\subsection{Gradient computation} 
\label{sec:grad}

Our method builds on a gradient formula for parametrized controls $u=u^\theta$, originally derived in~\citet{yang1991monte} and also presented in~\citet{ribeira2024improving}:

\begin{proposition}
\label{th:grad}
Let $u^\theta$ with $\theta\in \Theta$ be a parametric realization of a control in $\mathcal{U}$ and denote $L(\theta) \equiv J(u^\theta)$ the objective~\eqref{eq:obj} viewed as a function of $\theta$. Then 
\begin{equation}
\label{eq:grad}
\begin{aligned}
\partial_\theta L(\theta)
    = \EE\left[ \int_0^T u_t^\theta(X^{\theta}_t) \cdot \partial_\theta u_t^\theta(X^{\theta}_t) dt\right]  
     + \EE\left[ \mathcal{J}(u^\theta,X^\theta) \int_0^T \partial_\theta u_t^\theta(X^{\theta}_t)\cdot dW_t \right]
\end{aligned}
\end{equation}
where $\partial_\theta u_t^\theta(X^{\theta}_t)$ denotes $\partial_\theta u_t^\theta(x)$ evaluated at $x=X^{\theta}_t$ and  $X^\theta=(X^\theta_t)_{t\in [0,T]} \equiv (X^{u^\theta}_t)_{t\in [0,T]}$ solves the SDE
\begin{equation}
\label{eq:sde:theta}
    d X_t^\theta = \left( b_t(X_t^\theta) + \sigma_t u^\theta_t(X_t^\theta) \right) dt +  \sigma_t  d W_t, \quad X^\theta_0 \sim \mu_0,
\end{equation}
and the expectation $\E$ in~\eqref{eq:grad} is taken over the law of $X^\theta$.
\end{proposition}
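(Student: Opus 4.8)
The plan is to exploit the off-policy representation of Lemma~\ref{th:girs}, but with the reference control \emph{frozen} at the point where we differentiate. Concretely, I would fix an arbitrary $\theta_0 \in \Theta$ and apply Lemma~\ref{th:girs} with the choice $v = u^{\theta_0}$. Writing $X \equiv X^{\theta_0}$ for the associated state process, Lemma~\ref{th:girs} gives
\begin{equation*}
L(\theta) = \E_{X^{\theta_0}}\!\left[\left(\int_0^T\!\left(\tfrac12|u_t^\theta(X_t)|^2 + f_t(X_t)\right)dt + g(X_T)\right) M(u^\theta,u^{\theta_0})\right].
\end{equation*}
The decisive point is that the law of $X=X^{\theta_0}$ no longer depends on the differentiation variable $\theta$: all the $\theta$-dependence now sits in the integrand, through $u_t^\theta$ and through the Girsanov factor $M(u^\theta,u^{\theta_0})$. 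I would therefore differentiate under the expectation and evaluate the result at $\theta=\theta_0$.

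Carrying out the product rule, the derivative of the cost bracket is simply $\int_0^T u_t^\theta(X_t)\cdot\partial_\theta u_t^\theta(X_t)\,dt$. For the Girsanov factor I would write $M(u^\theta,u^{\theta_0}) = e^{A(\theta)}$ with $A(\theta) = -\int_0^T (u_t^{\theta_0}(X_t) - u_t^\theta(X_t))\cdot dW_t - \tfrac12\int_0^T|u_t^{\theta_0}(X_t) - u_t^\theta(X_t)|^2\,dt$, so that $\partial_\theta M = M\,\partial_\theta A$ with
\begin{equation*}
\partial_\theta A(\theta) = \int_0^T \partial_\theta u_t^\theta(X_t)\cdot dW_t + \int_0^T\big(u_t^{\theta_0}(X_t) - u_t^\theta(X_t)\big)\cdot\partial_\theta u_t^\theta(X_t)\,dt.
\end{equation*}
Two simplifications now occur precisely at $\theta=\theta_0$: first, $M(u^{\theta_0},u^{\theta_0}) = 1$; second, the last integral in $\partial_\theta A$ vanishes because $u^{\theta_0}-u^{\theta_0}=0$, leaving only the Itô integral $\int_0^T\partial_\theta u_t^{\theta_0}(X_t)\cdot dW_t$. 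Consequently the product rule collapses exactly to the two terms of~\eqref{eq:grad} (the first term from differentiating the cost bracket against $M=1$, the second from the cost bracket against $\partial_\theta M$). Since $\theta_0\in\Theta$ was arbitrary, relabeling $\theta_0\to\theta$ yields the claimed formula.

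The main obstacle is purely the analytic justification of the two interchanges used above: swapping $\partial_\theta$ with the expectation $\E_{X^{\theta_0}}$, and swapping $\partial_\theta$ with the Itô integral $\int_0^T(\cdot)\,dW_t$. The first requires a dominated-convergence / uniform-integrability argument controlling the difference quotient, which in turn needs moment bounds on the Girsanov factor and its $\theta$-derivative near $\theta_0$; a Novikov-type condition guaranteeing that $M(u^\theta,u^{\theta_0})$ is a genuine mean-one martingale is the natural hypothesis here, and this is exactly the regularity one should build into the admissible class $\cU$ (e.g.\ controls that are $C^1$ in $\theta$ with suitably bounded values and derivatives). The second interchange is legitimate because the integrator $dW_t$ does not depend on $\theta$, so differentiating under the stochastic integral is valid once the integrands $\partial_\theta u_t^\theta(X_t)$ are adapted and square-integrable (which again follows from the smoothness and moment assumptions on $\cU$), and can be made rigorous via the Itô isometry applied to the difference quotient. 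I would state these integrability hypotheses explicitly and then verify that they make every step above rigorous; the algebraic core of the argument is the clean cancellation at $\theta=\theta_0$ described in the previous paragraph.
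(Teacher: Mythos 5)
Your proposal is correct and follows essentially the same route as the paper's own proof: differentiate the off-policy Girsanov representation of Lemma~\ref{th:girs} with the reference control held fixed, apply the product rule to the cost bracket and the Girsanov factor, and then evaluate at the on-policy point where $M=1$ and the $(v-u^\theta)$ term vanishes. Your added remarks on justifying the interchange of $\partial_\theta$ with the expectation and the It\^o integral go beyond what the paper records, but the algebraic core is identical.
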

For completeness, we give the proof of this proposition in Appendix~\ref{app:finetuning}.
Expression~\eqref{eq:grad} eliminates the need to differentiate through state trajectories $(X^\theta_t)_{t\in[0,T]}$. While it requires an invertible, control-independent volatility $\sigma_t$, the formula can be extended to control-dependent volatilities using Malliavin calculus~\citep{gobet2005sensitivity}.

\begin{algorithm}[tb]
\caption{Simulation-Free On-Policy Training}
\begin{algorithmic}[1]
\State \textbf{Initialize:} $n$ walkers, $K$ time steps, model parameters $\theta$ for $ u^\theta$, gradient descent optimizer

\Repeat
\State Set $\bar \theta = \text{stopgrad}(\theta)$
\State Randomize time grid: $t_1, \dots, t_K \sim \text{Uniform}(0, T)$
\State Add $t_0=0$, $t_{K}=T$, and sort such that $0=t_0 < t_1 < \dots < t_{K-1}< t_{K} =T$
\State Set $\Delta t_k = t_{k+1} - t_k$
\For{each walker $i = 1, \dots, n$} 
 \State Set $x^i_{0} \sim \mu_0$
\For{$k=0, \dots, K-1$}
        \State $\Delta W^i_k = \sqrt{\Delta t_k} \, \zeta^i_k$, where $\zeta^i_k \sim N(0,\text{Id})$
        \State $x^i_{t_{k+1}} = x^i_{t_k} + u^{\bar \theta}_{t_k} (x^i_{t_k}) \Delta t_k  + \sigma_{t_k}\Delta W^i_k $
    \EndFor
\State $A^i_K =   \sum_{k=1}^K \tfrac12 |u^\theta_{t_k}(x^i_{t_k})|^2\Delta t_k $
        \State $\bar B^i_K =  \sum_{k=1}^K \left(\tfrac12 |u^{\bar \theta}_{t_k}(x^i_{t_k})|^2+f_{t_k}(x^i_{t_k})\right)\Delta t_k$
        \State $C_K^i = \sum_{k=1}^K u_{t_k}^\theta(x^i_{t_k})  \cdot \Delta W^i_k $
        \EndFor
    \State Compute:  $\hat L_n(\theta,\bar\theta) = n^{-1} \sum_{i=1}^n \big[ A_K^i + \big(\bar B^i_K + g(x^i_{t_{K}})\big) C_K^i \big]$.
    \State Compute 
    $\partial_\theta \hat L(\theta,\bar \theta)\big|_{\bar \theta = \theta}$ and take a step of gradient descent to update $\theta$.
\Until{converged}
\end{algorithmic}
\label{alg:train}
\end{algorithm}

\subsection{Alternative objective for implementation}
\label{sec:impl}

Equation~\eqref{eq:grad} can be implemented to directly estimate the gradient of the objective $L(\theta)=J(u^\theta)$ by estimating the expectation empirically over an ensemble of independent realizations of the SDE~\eqref{eq:sde:theta}. Alternatively, we can use automatic differentiation of an alternative objective~\citep{ribeira2024improving,domingoenrich2024taxonomy}:

\begin{proposition}
    \label{th:altloss}
    We have 
\begin{equation}
    \label{eq:equiv:grad}
    \partial_\theta L(\theta) = \partial_\theta \hat L(\theta,\bar \theta)\big|_{\bar \theta = \theta},
\end{equation}
where  we defined
\begin{equation}
\label{eq:alt:loss}
\begin{aligned}
\hat L(\theta,\bar \theta)
    &= \EE\left[ \int_0^T \tfrac12|u_t^\theta(X^{\bar \theta}_t) |^2 dt\right]  
     + \EE\left[ \mathcal{J}(u^{\bar \theta},X^{\bar \theta}) 
    \int_0^T u_t^\theta(X^{\bar \theta}_t)\cdot dW_t \right]
\end{aligned}
\end{equation}
in which $X^{\bar \theta} =(X^{\bar \theta}_t)_{t\in[0,T]} $ solves \eqref{eq:sde:theta} with $u^\theta$ replaced by $u^{\bar \theta}$ and the expectation $\E$ in~\eqref{eq:grad} is taken over the law of $X^{\bar \theta}$.
\end{proposition}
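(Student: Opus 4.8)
The plan is to exploit the two-argument structure of $\hat L(\theta,\bar\theta)$: the detached variable $\bar\theta$ appears only in places meant to be frozen (the law of the trajectory $X^{\bar\theta}$, the running-cost weight $\tfrac12|u^{\bar\theta}_t|^2+f_t$, and the terminal weight $g$), whereas $\theta$ appears only in the two explicit factors $\tfrac12|u_t^\theta|^2$ and $\int_0^T u_t^\theta(X^{\bar\theta}_t)\cdot dW_t$. The key point is that, when we differentiate in $\theta$ with $\bar\theta$ held fixed, the expectation $\EE_{X^{\bar\theta}}$ is taken over a measure that does not depend on $\theta$. Consequently no differentiation through the trajectory is needed, and $\partial_\theta$ passes directly inside both the expectation and the time integrals, touching only the explicit $\theta$ factors.

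First I would differentiate the first term. Since $X^{\bar\theta}$ is $\theta$-independent, $\partial_\theta \EE_{X^{\bar\theta}}[\int_0^T \tfrac12|u_t^\theta(X^{\bar\theta}_t)|^2\,dt] = \EE_{X^{\bar\theta}}[\int_0^T u_t^\theta(X^{\bar\theta}_t)\cdot\partial_\theta u_t^\theta(X^{\bar\theta}_t)\,dt]$. Next I would differentiate the second term: here the bracketed cost factor and the sampling law are both frozen in $\bar\theta$, so the only $\theta$-dependence sits in the stochastic integral, and $\partial_\theta$ acting on $\int_0^T u_t^\theta(X^{\bar\theta}_t)\cdot dW_t$ yields $\int_0^T \partial_\theta u_t^\theta(X^{\bar\theta}_t)\cdot dW_t$. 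Crucially, because the bracket carries $\bar\theta$ rather than $\theta$, it is \emph{not} differentiated — which is precisely what we want, since in \eqref{eq:grad} that bracket multiplies $\partial_\theta u_t^\theta\cdot dW_t$ and is never itself differentiated.

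Finally I would set $\bar\theta=\theta$. At this point $X^{\bar\theta}$ becomes $X^\theta$, the frozen running and terminal weights become $\tfrac12|u_t^\theta(X_t^\theta)|^2+f_t(X_t^\theta)$ and $g(X_T^\theta)$, and the two differentiated contributions collapse to exactly the right-hand side of \eqref{eq:grad}. Comparing term by term with Proposition~\ref{th:grad} then establishes \eqref{eq:equiv:grad}.

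The main obstacle is purely technical: justifying that $\partial_\theta$ commutes with $\EE_{X^{\bar\theta}}$ and with the It\^o integral $\int_0^T(\cdot)\,dW_t$. This is exactly where the detached formulation pays off — because the underlying measure (the law of $X^{\bar\theta}$) is independent of $\theta$, the interchange reduces to a standard differentiation-under-the-expectation argument via dominated convergence, under mild integrability and smoothness hypotheses on $u^\theta$ and $\partial_\theta u^\theta$ (e.g. at most linear growth, uniformly locally in $\theta$). No pathwise differentiation of $X^\theta$ in $\theta$ is required, in contrast to the naive on-policy approach that Proposition~\ref{th:grad} circumvents through Girsanov. I would therefore state these regularity assumptions explicitly and invoke differentiation under the integral sign to close the argument.
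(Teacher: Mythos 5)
Your proposal is correct and is precisely the ``direct calculation'' the paper alludes to when it omits the proof: differentiate $\hat L(\theta,\bar\theta)$ in $\theta$ with $\bar\theta$ frozen (so the law of $X^{\bar\theta}$ and the bracketed cost are untouched), then set $\bar\theta=\theta$ and match term by term with \eqref{eq:grad}. The added remark on interchanging $\partial_\theta$ with the expectation and the It\^o integral is a reasonable technical footnote but does not change the argument.
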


The proof of this proposition is immediate by direct calculation so we omit it for the sake of brevity.

The gradient $\partial_\theta \hat L(\theta,\bar \theta)\big|_{\bar \theta = \theta}$ can be computed via automatic differentiation by using $\bar\theta=\text{stopgrad}(\theta)$. This avoids differentiating through $\bar X \equiv X^{\bar \theta}$ while maintaining an on-policy objective. The expectation can be estimated empirically using samples from the SDE~\eqref{eq:sde:theta}, as detailed in Algorithm~\ref{alg:train}.

\subsection{Application to sampling via construction of a F\"ollmer process}
\label{sec:follmer}

By definition, the F\"ollmer process that samples a given target probability distribution~$\mu$ is the process $(Y^u_t)_{t\in[0,1]}$ that uses the optimal control $u$ obtained by solving

\begin{equation}
\label{eq:obj:folder}
\min_{u\in \mathcal{U}} \mathbb{E}\int_{0}^1 \tfrac{1}{2} | u_t(Y_t^u)|^2 dt,
\end{equation}
where 
\begin{equation}
\label{eq:sde:Y}
d Y_t^u = u_t(Y_t^u) dt +  d W_t,\quad Y_0^u \sim \delta_0, \quad Y^u_{1} \sim \mu.
\end{equation}
This problem is a special case of the Schr\"odinger bridge problem~\citep{leonard2014survey} when the base distribution is the Dirac delta distribution $\delta_0$, i.e. the point mass at $x=0$. 

While the minimization problem in~\eqref{eq:obj:folder} differs from a standard SOC problem~\eqref{eq:obj} due to its terminal condition $Y^u_{t=1} \sim \mu$, it can be reformulated as a SOC problem~\citep{leonard2014survey,chen2014relation}—an insight exploited by~\citet{zhang2022path}. The connection is as follows:

\begin{proposition}
    \label{th:follmer}
    Assume that $\mu$ is absolutely continuous with respect of the Lebesgue measure and let its probability density function be $\rho(x) = Z^{-1}e^{-U(x)}$ where $U:\R^d \to\R$ is a known potential and $Z=\int_{\R^d} e^{-U(x)} dx <\infty$ is an unknown normalization factor. Consider the SOC problem using the objective
\begin{equation}
\label{eq:obj:folder:soc}
    J(u) = \mathbb{E}\left[\int_{0}^1 \tfrac{1}{2} | u_t(X_t^u)|^2 dt  -\tfrac12 |X^u_1|^2 + U(X^u_1)]\right],
\end{equation}
where $(X_t^u)_{t\in[0,T]}$ solves the SDE
\begin{equation}
    \label{eq:sde:follmer}
    dX^u_t = u_t(X^u_t) dt + dW_t, \qquad X^u_{t=0} \sim \delta_0.
\end{equation}  
Then the process $(X^u_t)_{t\in[0,1]}$ obtained by using the optimal control minimizing~\eqref{eq:obj:folder:soc} in the SDE~\eqref{eq:sde:follmer} is the F\"ollmer process that satisfies $X^u_{t=1} \sim \mu$.
\end{proposition}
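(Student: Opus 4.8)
The plan is to attack this unconstrained-terminal SOC problem by dynamic programming, linearize the resulting Hamilton--Jacobi--Bellman (HJB) equation through a Cole--Hopf transformation, and then recognize the optimally controlled dynamics as a Doob $h$-transform of Brownian motion whose time-$1$ marginal is exactly $\mu$. Concretely, I would first introduce the value function $V(t,x)$ associated with minimizing the cost in~\eqref{eq:obj:folder:soc} from $X_t=x$, with terminal cost $\Psi(x) = -\tfrac12|x|^2 + U(x)$ and dynamics $dX_s = u_s\,ds + dW_s$ as in~\eqref{eq:sde:follmer}. Its HJB equation is $\partial_t V + \tfrac12\Delta V - \tfrac12|\nabla V|^2 = 0$ with $V(1,\cdot)=\Psi$, and pointwise minimization of the Hamiltonian yields the candidate optimal feedback $u^\star(t,x) = -\nabla V(t,x)$; a verification argument (Itô's formula applied to $V(t,X_t)$ along an arbitrary admissible control together with completion of the square) confirms optimality.

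Next I would apply the Cole--Hopf substitution $V = -\log\varphi$, which converts the nonlinear HJB equation into the linear backward heat equation $\partial_t\varphi + \tfrac12\Delta\varphi = 0$ with terminal data $\varphi(1,x) = e^{-\Psi(x)} = e^{\frac12|x|^2 - U(x)}$, so that the optimal control becomes $u^\star = \nabla\log\varphi$. The key algebraic observation is that, writing $q_1$ for the density of $\mathcal N(0,\mathrm{Id})$ (the time-$1$ law of the uncontrolled process started at $\delta_0$), one has $\varphi(1,x)\propto \rho(x)/q_1(x)$ up to the irrelevant constant $Z$; that is, the terminal data of $\varphi$ is precisely the Radon--Nikodym derivative of the target against the reference terminal law. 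Solving the backward heat equation then represents $\varphi(t,\cdot)$ as the Brownian (heat-semigroup) propagation of $\rho/q_1$, i.e. the Föllmer potential.

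Finally I would identify the terminal marginal of the controlled process. Because $\varphi$ solves the backward heat equation, Itô's formula shows $\varphi(t,W_t)$ is a local martingale under Wiener measure $\mathbb{P}$, so the change of measure $d\mathbb{Q}/d\mathbb{P} = \varphi(1,W_1)/\varphi(0,0)$ on $\mathcal F_1$ is exactly the Girsanov/Doob $h$-transform that adds the drift $\nabla\log\varphi$ and reproduces the SDE driven by $u^\star$. Evaluating $\E[F(X_1)]$ for a test function $F$ under $\mathbb{Q}$ gives $\varphi(0,0)^{-1}\int F(y)\,\varphi(1,y)\,q_1(y)\,dy$, and since $\varphi(1,y)\,q_1(y)\propto\rho(y)$, taking $F\equiv 1$ fixes the normalizing constant and shows $X^u_1\sim\mu$; the same computation identifies the controlled law on path space with the Föllmer process.

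The main obstacle I anticipate lies not in the algebra but in the analytic hypotheses required to make the verification theorem and the change of measure rigorous: one needs $\varphi$ to be finite, strictly positive and $C^{1,2}$ (which demands growth/integrability control on $U$ beyond merely $Z<\infty$), the candidate $u^\star=\nabla\log\varphi$ to be admissible, and the exponential local martingale $\varphi(t,W_t)/\varphi(0,0)$ to be a genuine uniformly integrable martingale so that Girsanov applies on all of $[0,1]$ --- a delicate point in Föllmer/Schrödinger problems, where the optimal drift may become singular as $t\to1$. Granting these regularity conditions, the identification of the optimally controlled process with the Föllmer process is immediate.
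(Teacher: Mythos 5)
Your argument is correct, but it follows a genuinely different route from the paper's. The paper does not prove Proposition~\ref{th:follmer} directly: it derives it as the special case $b=0$, $\sigma=1$, $\nu=N(0,\mathrm{Id})$, $r=-g$ of Proposition~\ref{th:fine:tune}, whose proof in Appendix~\ref{app:finetuning} is a PDE verification. There, the SOC problem is characterized by the coupled system \eqref{eq:soc:pde:mu}--\eqref{eq:soc:pde:phi} (Fokker--Planck for the controlled marginal $\mu_t$ plus the nonlinear backward equation for the potential $\phi_t$ with $u_t=\sigma_t^\top\nabla\phi_t$), and one checks by direct substitution that the ansatz $\hat\mu_t = Z^{-1}e^{\phi_t}\nu_t$ solves the Fokker--Planck equation, the initial condition being forced by $\nu_0=\delta_0$ and normalization. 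Your $\log\varphi$ is exactly the paper's $\phi_t$, and your h-transform marginal $\varphi(1,y)q_1(y)/\varphi(0,0)$ is exactly the paper's factorization at $t=1$; but you obtain it by linearizing the HJB equation via Cole--Hopf and then invoking the Doob $h$-transform/Girsanov on path space, rather than by verifying the density factorization pointwise in the PDE. What your route buys: the linearization gives an explicit representation of $\varphi$ as heat-semigroup propagation of $\rho/q_1$, and the path-space change of measure identifies the \emph{law of the whole controlled trajectory} with the F\"ollmer process, not merely its time-$1$ marginal --- which is closer to what the proposition literally asserts. What the paper's route buys: it generalizes immediately to nonzero drift $b_t$ and time-dependent $\sigma_t$ (the fine-tuning setting), where the clean Cole--Hopf-to-heat-equation reduction is less transparent. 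Both arguments share the same unproved analytic hypotheses (regularity and positivity of $\varphi$ respectively uniqueness of the solution pair $(\mu_t,\phi_t)$, admissibility of the optimal drift, and uniform integrability of the exponential martingale near $t=1$), which you flag more explicitly than the paper does.
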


We omit the proof of this proposition since it is a special case of Proposition~\ref{th:fine:tune} established below.

Our approach can solve the SOC problem in Proposition~\ref{th:follmer}, providing a simulation-free implementation of the Path Integral Sampler (PIS) \citep{zhang2022path}. Section~\ref{sec:expeiments} demonstrates the computational advantage of our approach through examples.

Since we replaced the terminal constraint in SDE~\eqref{eq:sde:follmer} with a terminal cost in~\eqref{eq:obj:folder:soc}, $X^u_{t=1} \sim \mu$ is not guaranteed for suboptimal controls. However, as noted in~\citet{zhang2022path}, we can still compute unbiased expectations over $\mu$ for any control through Girsanov reweighting: 

\begin{proposition}
    \label{th:girs:follm}
    Consider the process~$(X^u_t)_{t\in[0,T]}$ obtained by solving the SDE~\eqref{eq:sde:follmer} with any (not necessary optimal) control $u$. Then, given any suitable test function $h:\R^d\to\R$, we have
\begin{equation}
    \label{eq:expect}
    \int_{\R^d} h(x) \mu(dx) = Z^{-1} \E\left[h(X^u_T) M(u)\right], \qquad Z = \int_{\R^d} e^{-U(x)} dx = \E \left[M(u)\right],
\end{equation}
where we defined
\begin{equation}
    \label{eq:M:f}
    \begin{aligned}
    M(u) & = (2\pi)^{d/2}\exp\left( - \int_0^1  \tfrac12|u_t(X^u)|^2dt  - \int_0^1 u_t(X^u_t)\cdot dW_t +\tfrac12 |X_1^u|^2- U(X^u_1) \right).
    \end{aligned}
\end{equation}
In addition  $M(u)=Z$ \textit{iff} $u$ is the optimal control minimizing the SOC problem with objective~\eqref{eq:obj:folder:soc}. 
\end{proposition}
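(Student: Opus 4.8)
The plan is to prove the two assertions separately: first the reweighting identity \eqref{eq:expect} together with $Z=\E_{X^u}[M(u)]$, and then the characterization $M(u)=Z \iff u$ optimal. For the reweighting identity, the idea is to change measure from the law $\mathbb{P}^u$ of the controlled process $(X^u_t)_{t\in[0,1]}$ solving \eqref{eq:sde:follmer} to the law $\mathbb{P}^0$ of a standard Brownian motion started at the origin (the case $u\equiv 0$). By Girsanov's theorem the Radon--Nikodym derivative is $\tfrac{d\mathbb{P}^u}{d\mathbb{P}^0}=\exp(\int_0^1 u_t(X_t)\cdot dX_t-\tfrac12\int_0^1|u_t(X_t)|^2\,dt)$. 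The first step is to rewrite the factor \eqref{eq:M:f}: substituting $dW_t=dX^u_t-u_t(X^u_t)\,dt$ into its stochastic integral shows that the control-dependent part of $M(u)$ is exactly $(2\pi)^{d/2}(\tfrac{d\mathbb{P}^u}{d\mathbb{P}^0})^{-1}\exp(\tfrac12|X_1|^2-U(X_1))$.

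Applying $\E_{\mathbb{P}^u}[\,\cdot\,]=\E_{\mathbb{P}^0}[\,\cdot\,\tfrac{d\mathbb{P}^u}{d\mathbb{P}^0}]$, the inverse Radon--Nikodym factor cancels and we are left with $\E_{X^u}[h(X^u_1)M(u)]=(2\pi)^{d/2}\E_{\mathbb{P}^0}[h(X_1)\exp(\tfrac12|X_1|^2-U(X_1))]$. Since under $\mathbb{P}^0$ we have $X_1\sim N(0,\mathrm{Id})$ with density $(2\pi)^{-d/2}e^{-|x|^2/2}$, the prefactor $(2\pi)^{d/2}$ together with $e^{|x|^2/2}$ exactly cancels the Gaussian normalization, yielding $\int_{\R^d}h(x)e^{-U(x)}\,dx=Z\int_{\R^d}h\,d\mu$, which is \eqref{eq:expect}. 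Taking $h\equiv 1$ then gives $Z=\E_{X^u}[M(u)]$.

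For the characterization of optimality, the plan is to express the objective in terms of $M(u)$ and apply Jensen's inequality. Expanding $-\log M(u)$ and using that the It\^o integral $\int_0^1 u_t(X^u_t)\cdot dW_t$ has zero mean gives $J(u)=\tfrac{d}{2}\log(2\pi)-\E_{X^u}[\log M(u)]$ for the objective \eqref{eq:obj:folder:soc}. Since $\log$ is concave and $\E_{X^u}[M(u)]=Z$, Jensen yields $\E_{X^u}[\log M(u)]\le \log Z$, hence $J(u)\ge \tfrac{d}{2}\log(2\pi)-\log Z$, with equality if and only if $M(u)$ is almost surely constant; because its mean is $Z$, that constant must equal $Z$. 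It remains to check that this lower bound is the true minimum, which I would do by exhibiting the optimal control: via the Hopf--Cole transform $\varphi=e^{-V}$ of the HJB value function $V$, the optimal control is $u^\star_t=\nabla\log\varphi(t,\cdot)$, and an It\^o computation on $\log\varphi(t,X^{u^\star}_t)$ (using $\partial_t\varphi+\tfrac12\Delta\varphi=0$) shows $M(u^\star)=Z$ almost surely with $J(u^\star)=\tfrac{d}{2}\log(2\pi)-\log Z$. Therefore $M(u)=Z$ a.s. if and only if $J(u)$ attains its minimum, i.e. $u$ is optimal.

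The main obstacle is technical rather than conceptual: justifying these manipulations requires integrability conditions ensuring Girsanov's change of measure is valid and that the stochastic integrals are genuine martingales with vanishing expectation (e.g. a Novikov-type condition on $u$). Care is also needed to confirm the Jensen lower bound is actually attained --- equivalently that the optimal control exists and renders $M(u^\star)$ deterministic --- which ties the equality case to the F\"ollmer/Schr\"odinger bridge structure guaranteeing $X^{u^\star}_1\sim\mu$, consistent with \eqref{eq:expect} reducing to exact sampling under the optimal control. (Note the horizon here is $T=1$, matching \eqref{eq:sde:follmer}.)
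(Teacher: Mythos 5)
Your proposal is correct, and its two halves sit differently relative to the paper. For the reweighting identity your argument is essentially the paper's: the paper obtains Proposition~\ref{th:girs:follm} as the special case $b=0$, $\sigma=1$, $r=-U+\tfrac12|\cdot|^2$ of Proposition~\ref{th:fine:tune:2}, whose proof is precisely the Girsanov change of measure to the uncontrolled reference process that you carry out directly against Wiener measure; your explicit bookkeeping of the $(2\pi)^{d/2}$ prefactor against the $N(0,\mathrm{Id})$ density is the same cancellation the paper absorbs into $e^{r(x)}\nu(dx)$. For the \emph{iff} characterization you take a genuinely different route. The paper argues that $X^u_1\sim\mu$ holds iff $u$ is optimal and that the identity $\E_{X^u}[h(X^u_1)]=Z^{-1}\E_{X^u}[h(X^u_1)M(u)]$ for all test functions $h$ "can only hold if $M(u)=Z$"; strictly speaking that argument only yields $\E[M(u)\mid X^u_1]=Z$, so pathwise constancy requires more. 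Your variational argument --- $J(u)=\tfrac{d}{2}\log(2\pi)-\E[\log M(u)]\ge\tfrac{d}{2}\log(2\pi)-\log Z$ by Jensen, with equality iff $M(u)$ is a.s.\ constant and hence equal to its mean $Z$ --- delivers the pathwise statement directly and is the standard Bou\'e--Dupuis/path-integral-sampler reasoning. What it buys is a self-contained and sharper equality case; what it costs is having to verify that the lower bound is attained, which you correctly flag and discharge via the Hopf--Cole construction of $u^\star$ (the paper discharges the analogous step by invoking Proposition~\ref{th:fine:tune}). The integrability caveats you record (Novikov-type condition, true-martingale property of $\int_0^1 u_t\cdot dW_t$) are the right ones and are left implicit in the paper as well.
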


We also omit the proof of this proposition since it is a special case of Proposition~\ref{th:fine:tune:2} established below.

\subsection{Application to fine-tuning}
\label{sec:finetuning}

The approach in Section~\ref{sec:follmer} can be adapted to fine-tune generative models. Suppose that the drift $b$ in the SDE
\begin{equation}
    \label{eq:ref:Y}
    d Y_t = b_t(Y_t) dt+ \sigma_t dW_t, \qquad Y_0 \sim \delta_0,
\end{equation}
has been tailored in such a way that $Y_{t=T} \sim \nu$ where $\nu$ is a given probability distribution. Learning such a $b$ can for instance be done using the framework of score-based diffusion models~\citep{song2021scorebasedgenerativemodelingstochastic} or stochastic interpolants~\citep{albergo2022building,albergo2023stochasticinterpolantsunifyingframework} tailored to building F\"ollmer processes~\citep{chen2024forecasting}. Assume that we would like to fine-tune this diffusion so that it samples instead the probability distribution 
\begin{equation}
    \label{eq:nu}
    \mu(dx) = Z^{-1} e^{r(x)} \nu(dx), \quad Z= \int_{\R^d} e^{r(x)} \nu(dx),
\end{equation}
obtained by tilting $\nu$ by the reward function $r:\R^d \to \R$ (assuming that this tilted measure is normalizable, i.e. $Z<\infty$). Such problems arise in the context of image generation where they have received a lot of attention lately. Our next result shows that it can be cast into a SOC problem.

\begin{proposition}
    \label{th:fine:tune}  
    Consider the SOC problem~\eqref{eq:obj} with zero running cost, $f=0$, and terminal cost set to  minus the reward function, $g=-r$, in the objective~\eqref{eq:obj:folder}. Assume also that the drift $b$ and the volatility $\sigma$ used in  the SDE~\eqref{eq:sde} are the same as those used in the SDE~\eqref{eq:ref:Y}  that guarantees that $Y_{t=T} \sim \nu$. Then the solutions of the SDE~\eqref{eq:sde} solved with the optimal $u$ minimizing this SOC problem and $X^u_{t=0} = 0$ are such that $X^{u}_{t=T} \sim \mu$.
\end{proposition}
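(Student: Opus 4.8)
The plan is to use Girsanov's theorem to recast the SOC objective as a Kullback--Leibler divergence between the controlled path measure and a \emph{tilted} reference path measure whose terminal marginal is exactly $\mu$; minimizing the objective then forces these two measures to coincide. Write $\mathbb{P}^u$ for the law on path space of the controlled process $X^u$ solving \eqref{eq:sde}, and $\mathbb{P}\equiv\mathbb{P}^0$ for the law of the reference process $Y$ of \eqref{eq:ref:Y} (the controlled SDE with $u\equiv 0$), whose terminal marginal is $\nu$ by assumption; both start from the point mass $\delta_0$.

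First I would identify the running cost with a relative entropy. Since \eqref{eq:sde} and \eqref{eq:ref:Y} share the same invertible volatility and differ only through the drift term $\sigma_t u_t$, Girsanov's theorem applied on-policy gives
\begin{equation*}
\KL(\mathbb{P}^u\,\|\,\mathbb{P}) = \E_{X^u}\left[\int_0^T \tfrac12 |u_t(X_t^u)|^2 dt\right],
\end{equation*}
so that with $f=0$ and $g=-r$ the objective becomes $J(u)=\KL(\mathbb{P}^u\,\|\,\mathbb{P}) - \E_{X^u}[r(X_T^u)]$.

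Next I would introduce the tilted measure $\mathbb{Q}$ defined by $d\mathbb{Q}/d\mathbb{P}=Z^{-1}e^{r(X_T)}$, with $Z=\E_{\mathbb{P}}[e^{r(X_T)}]=\int e^{r}\,d\nu$ as in \eqref{eq:nu}. Because the tilt depends on the trajectory only through its endpoint and $\mathbb{P}$ has terminal marginal $\nu$, the terminal marginal of $\mathbb{Q}$ is precisely $Z^{-1}e^{r(x)}\nu(dx)=\mu(dx)$. Expanding $\log(d\mathbb{P}^u/d\mathbb{Q}) = \log(d\mathbb{P}^u/d\mathbb{P}) - r(X_T) + \log Z$ and taking $\E_{X^u}$, the previous step yields the key identity
\begin{equation*}
\KL(\mathbb{P}^u\,\|\,\mathbb{Q}) = J(u) + \log Z.
\end{equation*}
Since $\log Z$ is independent of $u$, minimizing $J$ over $\mathcal{U}$ is equivalent to minimizing $\KL(\mathbb{P}^u\,\|\,\mathbb{Q})\ge 0$, and the infimum $0$ is attained exactly when $\mathbb{P}^u=\mathbb{Q}$; the optimal control would then inherit the terminal marginal $\mu$ from $\mathbb{Q}$.

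The hard part will be showing that $\mathbb{Q}$ is genuinely realizable as $\mathbb{P}^{u^*}$ for an admissible $u^*$. For this I would invoke Doob's $h$-transform with $h(t,x)=\E_{\mathbb{P}}[e^{r(X_T)}\mid X_t=x]$: the tilted law $\mathbb{Q}$ is the diffusion with drift $b_t+\sigma_t\sigma_t^\top\nabla\log h(t,\cdot)$ and the same volatility $\sigma_t$, i.e. $\mathbb{Q}=\mathbb{P}^{u^*}$ with $u^*_t(x)=\sigma_t^\top\nabla\log h(t,x)$. The delicate points are the well-posedness of this drift (positivity and differentiability of $h$) and the integrability needed for Novikov's condition, so that all the Radon--Nikodym derivatives and relative entropies above are finite; the matching point-mass initial condition is what keeps $\KL(\mathbb{P}^u\,\|\,\mathbb{P})$ finite. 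Under mild assumptions (e.g. $r$ bounded above so that $Z<\infty$, $\sigma$ invertible) these hold, $J$ attains its minimum at $u^*$, and $X^{u^*}_{t=T}\sim\mu$ with $X^{u^*}_{t=0}=0$, as claimed. I would close by noting that Proposition~\ref{th:follmer} follows as the special case $b=0$, $\sigma=\mathrm{Id}$, $T=1$, $\nu=\mathcal{N}(0,\mathrm{Id})$, and $r(x)=\tfrac12|x|^2-U(x)$.
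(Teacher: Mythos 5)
Your argument is correct in substance but takes a genuinely different route from the paper. The paper works at the level of PDEs: it writes the optimality system as a Fokker--Planck equation for the controlled marginal $\mu_t$ coupled to a backward HJB equation for a potential $\phi_t$ with $\phi_T=r$ and $u_t=\sigma_t^\top\nabla\phi_t$, then verifies by direct substitution that $\hat\mu_t = Z^{-1}e^{\phi_t}\nu_t$ solves the Fokker--Planck equation, checks the initial condition, and invokes uniqueness to conclude $\mu_T = Z^{-1}e^{r}\nu = \mu$. You instead work on path space: Girsanov turns the running cost into $\KL(\mathbb{P}^u\,\|\,\mathbb{P})$, the identity $\KL(\mathbb{P}^u\,\|\,\mathbb{Q}) = J(u)+\log Z$ reduces the SOC problem to a KL projection onto the tilted measure $\mathbb{Q}$, and Doob's $h$-transform shows $\mathbb{Q}$ is realizable by a Markov control. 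The two arguments are dual to one another: your $\log h$ is the paper's $\phi$ via the Hopf--Cole transformation, and your $u^*=\sigma^\top\nabla\log h$ is exactly the paper's optimal control. Your route makes the variational structure explicit ($J(u)+\log Z\ge 0$ with equality iff $\mathbb{P}^u=\mathbb{Q}$), which also yields for free that the optimal value is $-\log Z$; the paper's route avoids any Novikov-type integrability discussion but leans on uniqueness of the coupled PDE system.

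One correction: you attribute the role of the point-mass initial condition to keeping $\KL(\mathbb{P}^u\,\|\,\mathbb{P})$ finite. That is not where it enters --- the two path measures share the same initial law by construction, so that term is finite (under the integrability you already assume) for any $\mu_0$. The point mass is needed in the realizability step: the tilted measure $\mathbb{Q}$ has initial marginal proportional to $h(0,x)\,\mu_0(dx)$, and this equals $\mu_0$ only when $\mu_0$ is atomic. For a diffuse $\mu_0$, no admissible control can alter the initial marginal, so $\mathbb{Q}$ lies outside the family $\{\mathbb{P}^u\}$, the infimum of $\KL(\mathbb{P}^u\,\|\,\mathbb{Q})$ is strictly positive, and the conclusion fails. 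This is precisely the failure mode the paper flags in the remark following its proof, where the second equality in~\eqref{eq:mu:at0} breaks down for non-atomic base distributions.
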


The proof of this proposition is given in Appendix~\ref{app:finetuning}. Note that  Proposition~\ref{th:follmer} follows from Proposition~\ref{th:fine:tune} as a special case if we set $b_t(x)=0$,  $\sigma_t=1$, and $T=1$, in which case $\nu = N(0,\text{Id})$, and we can set $r(x) = -U(x)+\frac12 |x|^2$ to target $\mu(dx) = Z^{-1} e^{-U(x)} dx$. 

The SOC problem in Proposition~\ref{th:fine:tune} can again be solved in a simulation-free way using our approach, thereby offering a simple alternative to the Adjoint Matching method proposed in~\citet{domingo2024adjointmatching}. Since in practice the learned control will be imperfect, we will need to reweigh the samples to get unbiased estimates of expectations over them. It can be done using this result:
\begin{proposition}
    \label{th:fine:tune:2}
    Let $X^u$ solve SDE~\eqref{eq:sde} starting from the initial condition $X^u_{t=0} = 0$ with an arbitrary (not necessarily optimal) $u$ and with the drift $b$ and the volatility $\sigma$  that guarantee that the solutions the SDE~\eqref{eq:sde} satisfy $Y_{t=T} \sim \nu$. Then given any suitable test function $h:\R^d\to\R$, we have
\begin{equation}
    \label{eq:expect:ft}
    \begin{aligned}
&  \int_{\R^d}  h(x) \mu(dx)  = Z^{-1} \E \left[h(X^u_T) M_r(u)\right], \qquad  Z = \int_{\R^d} e^{r(x)} \nu(dx) = \E \left[M_r(u)\right],
\end{aligned}
\end{equation}
where we defined
\begin{equation}
    \label{eq:M:ft}
    \begin{aligned}
    M_r(u) = & \exp\left( - \int_0^T  \tfrac12|u_t(X^u)|^2dt
    -\int_0^T u_t(X^u_t)\cdot dW_t +r(X^u_T) \right)
    \end{aligned}
\end{equation}
and the expectation is taken over the law of~$X^u$.
In addition,  $M_r(u)=Z$ \textit{iff} $u$ is the optimal control specified in Proposition~\ref{th:fine:tune}. 
\end{proposition}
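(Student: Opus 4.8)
The plan is to prove the two assertions of the proposition separately, starting with the change-of-measure identity \eqref{eq:expect:ft} and then treating the equality case $M_r(u)=Z$. For the identity I would apply Girsanov's theorem to transport the controlled law onto the reference law. The key observation is that the weight factors as $M_r(u) = D_T\, e^{r(X^u_T)}$, where
\[
D_T = \exp\Big(- \int_0^T u_t(X^u_t)\cdot dW_t - \tfrac12\int_0^T |u_t(X^u_t)|^2\, dt\Big)
\]
is the Dol\'eans--Dade exponential associated with removing the control drift $\sigma_t u_t$ from \eqref{eq:sde}. Assuming $u$ is admissible enough that $D_T$ is a true martingale (a Novikov-type condition), I define $\mathbb{Q}$ by $d\mathbb{Q}/d\mathbb{P} = D_T$. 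By Girsanov, $\hat W_t = W_t + \int_0^t u_s(X^u_s)\,ds$ is a $\mathbb{Q}$-Brownian motion, so under $\mathbb{Q}$ the process solves $dX^u_t = b_t(X^u_t)\,dt + \sigma_t\,d\hat W_t$ with $X^u_0 = 0$; this is exactly the reference SDE \eqref{eq:ref:Y}, whence $X^u_T\sim\nu$ under $\mathbb{Q}$. Then, for any test function $h$,
\[
\E_{X^u}\!\left[h(X^u_T)\,M_r(u)\right] = \E_{X^u}\!\left[h(X^u_T)\,e^{r(X^u_T)}\,D_T\right] = \E^{\mathbb{Q}}\!\left[h(X^u_T)\,e^{r(X^u_T)}\right] = \int_{\R^d} h(x)\,e^{r(x)}\,\nu(dx).
\]
Substituting $e^{r(x)}\nu(dx) = Z\,\mu(dx)$ from \eqref{eq:nu} gives \eqref{eq:expect:ft}, and taking $h\equiv 1$ yields $\E_{X^u}[M_r(u)] = Z$ for every admissible $u$.

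Next I would combine $\E_{X^u}[M_r(u)]=Z$ with Jensen's inequality. Since the It\^o integral $\int_0^T u_t\cdot dW_t$ is centered, $\E_{X^u}[\log M_r(u)] = \E_{X^u}\!\big[-\int_0^T\tfrac12|u_t|^2\,dt + r(X^u_T)\big] = -J(u)$. Applying Jensen to the concave logarithm, $-\log Z = -\log\E_{X^u}[M_r(u)] \le -\E_{X^u}[\log M_r(u)] = J(u)$, with equality if and only if $M_r(u)$ is almost surely constant, in which case that constant equals its mean $Z$. This simultaneously shows that $J(u)\ge -\log Z$ for all admissible controls and that $J(u) = -\log Z$ precisely when $M_r(u)=Z$ a.s.

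It then remains to match the controls saturating this bound with the optimal control of Proposition~\ref{th:fine:tune}. Here I would use the Hopf--Cole (logarithmic) transform: writing the value function of \eqref{eq:obj:folder:soc} as $V=-\log\phi$ linearizes its Hamilton--Jacobi--Bellman equation into the backward Kolmogorov equation $\partial_t\phi + b_t\cdot\nabla\phi + \tfrac12\mathrm{tr}(\sigma_t\sigma_t^\top\nabla^2\phi)=0$ with $\phi(\cdot,T)=e^{r}$, whose Feynman--Kac solution $\phi(x,t)=\E[e^{r(Y_T)}\mid Y_t=x]$ satisfies $\phi(0,0)=Z$. The optimal control is $u^\ast_t=\sigma_t^\top\nabla\log\phi(\cdot,t)$, and applying It\^o's formula to $\log\phi(X^{u^\ast}_t,t)$ and substituting this PDE makes $\log M_r(u^\ast)$ collapse to the deterministic value $\log\phi(0,0)=\log Z$. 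Hence $u^\ast$ attains the lower bound $-\log Z$; combined with the Jensen step and strict convexity of $J$ in $u$, this gives $M_r(u)=Z$ a.s. $\iff J(u)=-\log Z \iff u=u^\ast$.

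I expect the main obstacle to be this last step: verifying that the optimal control renders $M_r$ deterministic requires the Hopf--Cole linearization together with a careful It\^o computation in which the quadratic gradient terms must cancel exactly. The enabling technical hypotheses — Novikov's condition ensuring $D_T$ is a genuine martingale, and $\E_{X^u}\int_0^T|u_t|^2\,dt<\infty$ so the It\^o integral is centered — are the regularity conditions one should build into the admissible class $\mathcal{U}$ for both parts to be rigorous.
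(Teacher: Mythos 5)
Your Girsanov argument for the identity \eqref{eq:expect:ft} is exactly the paper's: both transport the expectation onto the reference process \eqref{eq:ref:Y} via the Dol\'eans--Dade factor, use $Y_T\sim\nu$, and then set $h\equiv 1$; you merely spell out the Novikov/martingale hypotheses that the paper leaves implicit. For the equality case $M_r(u)=Z$, however, you take a genuinely different route. The paper leans on Proposition~\ref{th:fine:tune} to assert $X^u_T\sim\mu$ \emph{iff} $u$ is optimal, and then argues that $\E_{X^u}[h(X^u_T)]=Z^{-1}\E_{X^u}[h(X^u_T)M_r(u)]$ for all $h$ ``can only hold if $M_r(u)=Z$''---a step that, taken literally, only yields $\E[M_r(u)\mid X^u_T]=Z$ rather than $M_r(u)=Z$ almost surely, and which does not address the converse direction at all. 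Your argument replaces this with Jensen's inequality applied to $\log M_r(u)$ (using $\E[\log M_r(u)]=-J(u)$ and $\E[M_r(u)]=Z$ to get $J(u)\ge-\log Z$ with equality iff $M_r(u)$ is a.s.\ constant), followed by a Hopf--Cole/Feynman--Kac verification that the optimal control renders $\log M_r(u^\ast)$ deterministic and equal to $\log Z$. This buys you both implications cleanly, identifies the optimal value $-\log Z$ as a byproduct, and closes the ``a.s.\ constant'' gap in the paper's reasoning; the price is the extra HJB/It\^o machinery in the verification step, and a slight hand-wave in invoking ``strict convexity of $J$'' for uniqueness of the minimizer (uniqueness really comes from uniqueness of the HJB solution, as in the paper's Proposition~\ref{th:fine:tune}). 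Overall your proposal is correct and, on the equality case, more complete than the paper's own proof.
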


The proof of this proposition is given in Appendix~\ref{app:finetuning}. Proposition~\ref{th:girs:follm} follows from Proposition~\ref{th:fine:tune:2} as a special case if we set $b_t(x)=0$,  $\sigma_t=1$, $T=1$,  and  $r(x) = -U(x)+\frac12 |x|^2$.

\section{Experiments}
\label{sec:expeiments}

We test our method on two applications: sampling from unnormalized distributions via Schr\"odinger-F\"ollmer processes and fine-tuning pre-trained diffusion models. In Appendix~\ref{app:add_expeiments}, we also report the performance of our approach on classical SOC benchmarks involving linear Ornstein-Ulhenbeck processes with linear and quadratic costs, that are amenable to exact solution for benchmarking.

\subsection{Sampling from an unnormalized distribution}
\label{sec:unnorm}

We test our method on Neal's funnel distribution in $d=10$ dimensions, which can be sampled by solving the SOC problem formulated in Sec.~\ref{sec:follmer}. The distribution is defined by $x_0\sim N(0,\sigma_0)$ and $x_{1:9}|x_0\sim N(0,e^{x_0}\text{Id})$. Previously examined by~\citet{zhang2022path} using the Path Integral Sampler (PIS), this distribution becomes exponentially difficult to sample as $\sigma_0$ increases: negative $x_0$ values produce exponentially small spreads in $x_{1:9}$, while positive values yield exponentially large spreads. We examine cases with $\sigma_0=1$ and $\sigma_0=3$.

Following~\citet{zhang2022path}, we parameterize the control as
\begin{equation*}
u^\theta_t (x) = \text{NN}_1^\theta (t,x) + \text{NN}_2^\theta (t) \times \nabla \log \rho (x),
\end{equation*}
where $\nabla \log \rho (x)$ is the score of the funnel distribution density. For each network $\text{NN}_1^\theta$ and $\text{NN}_2^\theta$, we encode the scalar $t$ into $128$ dimensions using Fourier positional encoding, process it through two fully connected layers with $64$ hidden units, and separately process $x$ through a two-layer MLP to obtain $64$-dimensional features. The concatenated features feed into a three-layer network for the final output. To increase difficulty, the last linear layers of both networks are initialized to zero, yielding $u^\theta_t (x) = 0$ initially. Optimization uses Adam \citep{kingma2014adam} with learning rate $5\cdot 10^{-3}$.

\begin{table}
\centering
\setlength{\tabcolsep}{2.5pt}
\begin{tabular}{@{} l r r r r@{}}\toprule
\textbf{Algorithm}  
& \textbf{MMD} $\downarrow$ \\
\cmidrule(r){1-2}

Our Method & 
$\mathbf{0.043 \pm 0.001}$ \\

PIS~\citep{zhang2022path} & 
$\mathbf{0.048 \pm 0.001}$\\
FAB~\citep{midgley2022flow}& $\mathbf{0.032 \pm 0.000}$ \\ 
GMMVI~\citep{arenz2022unified}& $\mathbf{0.031 \pm 0.000}$ \\ 
DDS~\citep{vargas2023denoising} & $0.172 \pm 0.031$   \\ 
AFT~\citep{arbel2021annealed}       & $0.159 \pm 0.010$ \\
CRAFT~\citep{arbel2021annealed}         & $0.115 \pm 0.003$ \\ 
CMCD-KL~\citep{nusken2024transport} & $0.095 \pm 0.003$ \\ 
NETS-AM & $\mathbf{0.041 \pm 0.001}$ \\ ~\citep{albergo2025netsnonequilibriumtransportsampler} \\ 
\bottomrule
\end{tabular}
\caption{\textbf{Funnel Distribution Example}: Performance of our method measured by MMD (Maximum Discrepancy Distance) from the true distribution. Benchmarking is quoted from comparative results of \citet{blessing2024} and \citet{albergo2025netsnonequilibriumtransportsampler} for reproducibility. Following Blessing et al. (2024), we compute the MMD
between 10000 samples from the model and 10000 samples from the target and compare to the numbers reported by \citet{blessing2024} and \citet{albergo2025netsnonequilibriumtransportsampler}. }
\label{table:MMD_funnel}
\end{table}

\begin{figure}[!hbt]
    \centering
\includegraphics[width=0.32\linewidth]{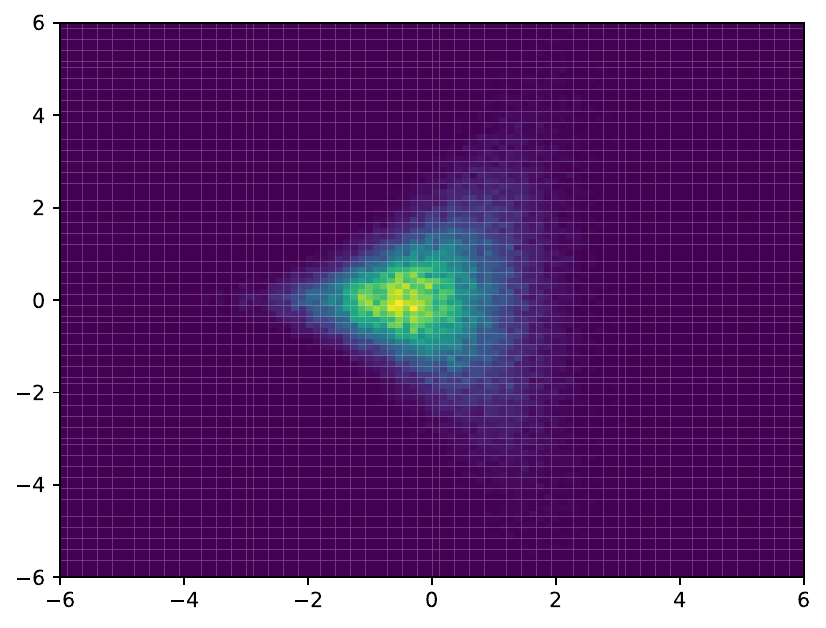}
\includegraphics[width=0.32\linewidth]{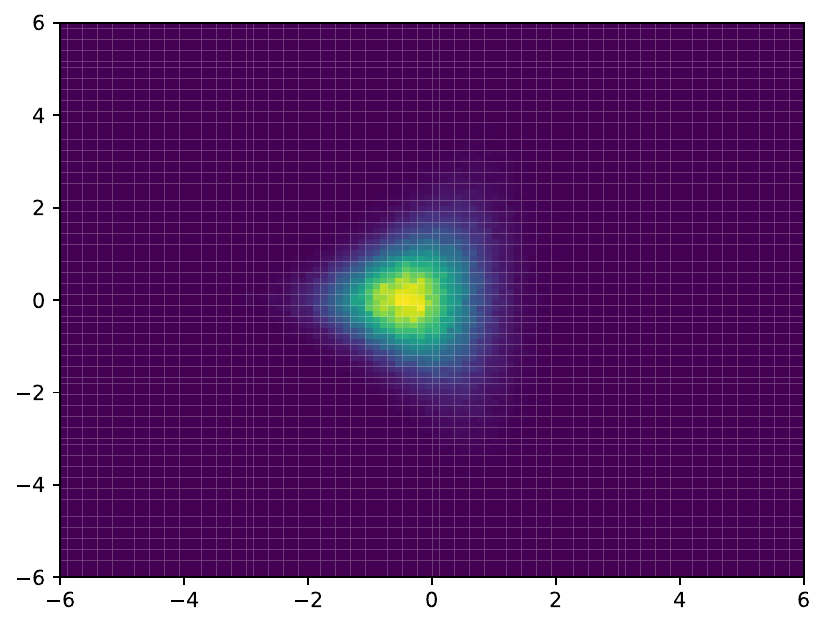}
\includegraphics[width=0.32\linewidth]{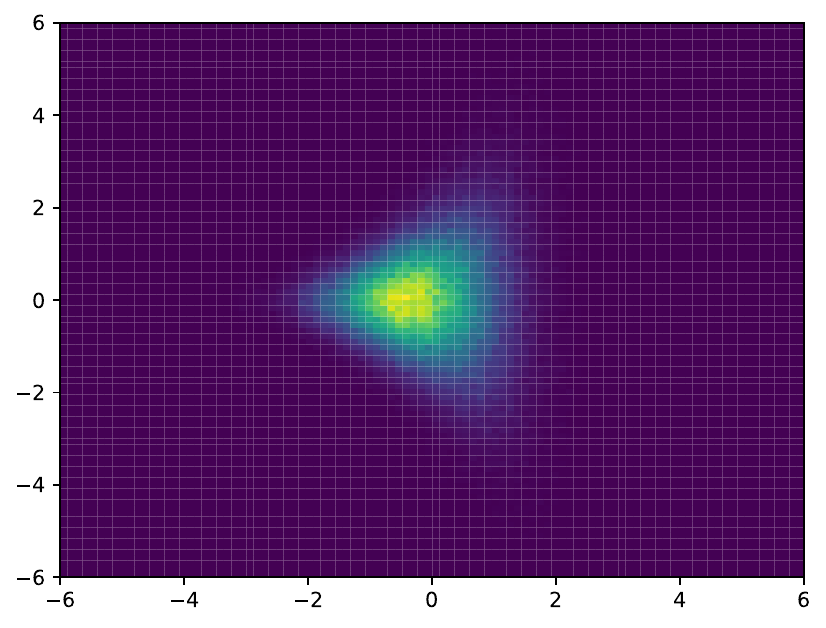}

\caption{{\bf Funnel distribution example ($\sigma = 1$):} The samples from the funnel distribution (left panel), Path Integral Samplers (middle panel) and our method (right panel). To plot the samples of the ten-dimensional funnel distribution in 2D, we use the independence of its coordinates $\{x_1,\cdots, x_9\}$, squeeze these nine dimensions into one coordinate, and keep the first dimension $x_0$. }
\label{fig:funnel_samples}
\end{figure}

\subsection{Fine-tuning on the $\varphi^4$ model}

We sample the $\varphi^4$ model in $d = 2$ spacetime dimensions, a statistical lattice field theory where field configurations $\varphi \in \mathbb{R}^{L \times L}$ represent the lattice state ($L$ denotes spatiotemporal extent). This model poses sampling challenges due to its phase transition from disorder to full order, during which neighboring sites develop strong correlations in sign and magnitude \cite{vierhaus, albergo2019}. Using the framework from Sec.~\ref{sec:finetuning}, we fine-tune a Gaussian distribution to match the fully ordered $\varphi^4$ model. 

The $\varphi^4$ model is specified by the following probability density function (PDF)
\begin{equation}
\label{eq:phi4:pdf}
    \rho(\varphi) = Z^{-1} e^{-E(\varphi)}
\end{equation}
where $Z = \int_{\R^{L\times L}} e^{-E(\varphi)} d\varphi$ is a normalization constant and $E$ is an energy function defined as
\begin{equation}
    \label{eq:energy:phi4}
    \begin{aligned}
        E(\varphi)  = & \frac12\alpha \sum_{a\sim b} |\varphi(a) - \varphi(b)|^2 +  \frac12\beta  \sum_{a} |\varphi(a)|^2  +  \frac14\gamma \sum_{a} |\varphi(a)|^4.
    \end{aligned}
\end{equation}
where $a,b\in[0,\ldots, L-1]^2$ denote the discrete positions on a $2$-dimensional lattice of size $L\times L$, $a\sim b$ denotes neighboring sites on the lattice, and we assume  periodic boundary conditions; $\alpha>0$ , $\beta \in \R$ and $\gamma >0$ are parameters. We will sample PDF~\eqref{eq:phi4:pdf} by fine-tuning a reference SDE whose time $t=1$ solutions sample the Gaussian PDF
\begin{equation*}
\rho_0(\varphi) = Z_0^{-1} e^{-E_0 (\varphi)},
\end{equation*}
where $Z_0 = \int_{\R^{L\times L}} e^{-E_0 (\varphi)} d\varphi$ and
\begin{equation}
\label{eq:energy:phi4:0}
E_0(\varphi) = \frac12\alpha \sum_{a\sim b} |\varphi(a) - \varphi(b)|^2 +  \frac12\beta_0  \sum_{a} |\varphi(a)|^2,
\end{equation}
with $\alpha>0$ as in \eqref{eq:energy:phi4} and $\beta_0>0$. Writing $E_0(\varphi)= \frac12 \varphi^T C^{-1} \varphi$ shows that $\rho_0(\varphi)$ is a zero-mean Gaussian with covariance $C$ (see Appendix~\ref{app:phi_4}). Thus, solutions to
\begin{equation}
\label{eq:ref_process_phi_4}
d \varphi^0_t =  C^{-1/2} d W_t, \qquad \varphi^0_{t=0} = 0
\end{equation}
satisfy $\varphi^0_{t = 1} \sim \rho_0 (\varphi)$, and we take it as reference process to fine-tune.

Specifically, we consider the solution $u$ to the optimal control problem with the objective
\begin{equation}
\label{eq:objective_phi_4}
    \min_{ u} \E \left[ \frac{1}{2}  
    \int_0^1  \sum_a | u(t,a,\varphi^u_t )|^2 dt +  U( \varphi^u_T  ) \right],
\end{equation}
where $ \varphi^u_t(a)$ solves the SDE
\begin{equation}
\label{eq:control_process_phi_4}
     d  \varphi^u_t(a)  = C^{-1}(a) u (t,a,\varphi^u_t) dt  + C^{-1/2}(a) d W_t(a)
\end{equation}
with $\varphi^u_{t=0} (a)  = 0, U(\varphi^u_T) = E(\varphi^u_T) - E_0(\varphi^u_T)$ and $T = 1$. By Proposition~\ref{th:follmer}, if we use the optimal control in~\eqref{eq:control_process_phi_4_fourier}, we have that $\varphi^u_{t=1} (a )$ samples the PDF~\eqref{eq:phi4:pdf}.

\begin{figure}[t]
    \centering
\includegraphics[width=0.4\linewidth]{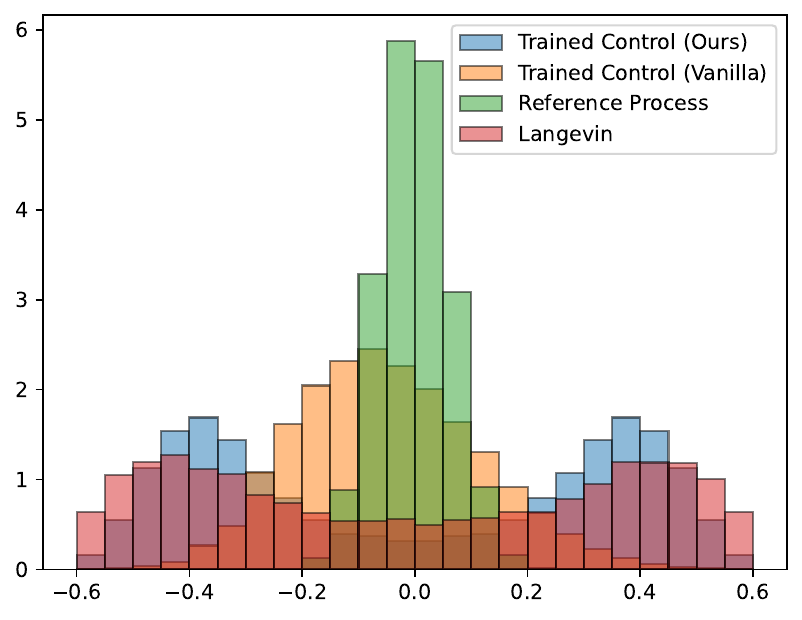}
\includegraphics[width=0.4\linewidth]{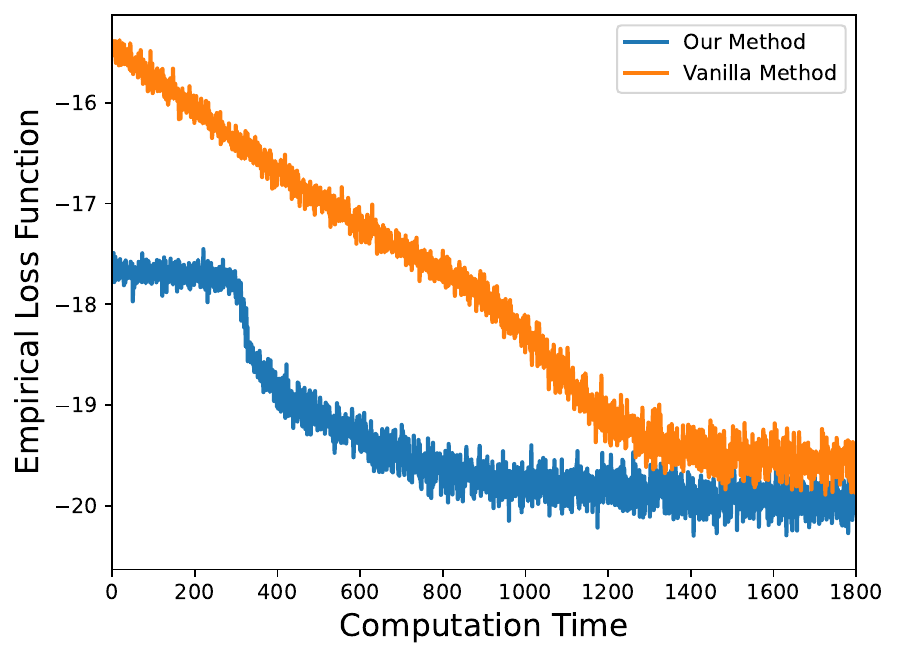}
\caption{\textbf{Top Panel}: Histograms of the average magnetization of $10000$ lattice configurations, sampled with the trained control of our method, the vanilla method, Langevin dynamics with $E(\varphi)$, and the reference PDF $\rho_0(\varphi)$. We use samples obtained by running Langevin dynamics with the target potential $E(\varphi)$ as the ground-truth. (See Appendix~\ref{app:phi_4} for more details about how to sample with Langevin dynamics.) Note that the results of our method is closer to the Langevin target than the vanilla method, which has adopts a much smaller model and fails to capture the statistics. \textbf{Bottom Panel}: The empirical loss function over GPU compute time. Our method enables a much larger model under the same memory budget and therefore presents a much better learning curve as compared to the vanilla method .}
\label{fig:phi_4}
\end{figure}

\paragraph{Numerical Results:} We tested various setups and report the results of some of them here. With the same setup as described in~\citet{albergo2025netsnonequilibriumtransportsampler}, we choose $L = 16$, $\alpha = 2.0,\beta = -2.0, \gamma = 3.2$ so that the lattice system is at the phase transition and in the ordered phase. 

Compared to the experiments done in Sec.~\ref{sec:unnorm}, this fine-tuning task has a much higher dimensionality and therefore requires more expressive neural networks for effective training. For such situations, the vanilla method no longer scales efficiently. We performed experiments comparing our method to the vanilla method \textit{while maintaining the same memory constraints}. Our method significantly reduces memory usage compared to the vanilla approach, which relies on storing the entire computational graph during SDE integration. This reduction allows our method to train models with a much larger number of parameters, resulting in superior performance. The results of these experiments are summarized in Figure~\ref{fig:phi_4}.  

\section{Conclusion}
\label{sec:conclu}

We have introduced a simulation-free on-policy approach to SOC problems: we simulate trajectories using the actual control but detach this control for the computational graph when computing the gradient of the objective. This yields an efficient, scalable method for training deep neural networks to learn feedback controls, outperforming traditional vanilla approaches. We demonstrated its effectiveness across SOC and sampling applications, including F\"ollmer processes and diffusion model fine-tuning.

\section*{Reproducibility Statement}
All experiments done in this work rely on simply feed-forward neural networks, and can be done locally on a single GPU. Details for the network sizes are given in each experimental subsection. 

\section*{Acknowledgements}
We thank Michael Albergo, Joan Bruna, and Carles Domingo-Enrich for helpful discussions.  EVE is supported by the National Science Foundation under Awards DMR1420073, DMS-2012510, and DMS-2134216, by the Simons Collaboration on Wave Turbulence, Grant No. 617006, and by a Vannevar Bush Faculty Fellowship.

\bibliography{iclr2025_conference}
\bibliographystyle{iclr2025_conference}

\appendix

\section{Proofs of Propositions~\ref{th:grad}, \ref{th:fine:tune} and \ref{th:fine:tune:2}}
\label{app:finetuning}

\begin{proof}[Proof of Proposition~\ref{th:grad}]
    Equation~\eqref{eq:grad} follows from~\eqref{eq:obj:G} by a direct calculation in which we first evaluate the gradient of the objective $L(\theta) = J(u^\theta)$ with the fixed reference control $v$, which gives:
    \begin{equation}
\label{eq:grad:1}
\begin{aligned}
\partial_\theta L(\theta)
    & = \EE\left(\left[ \int_0^T u_t^\theta(X^{v}_t) \cdot \partial_\theta u_t^\theta(X^{v}_t) dt\right] M(u^\theta,v)\right)
    \\
    &+ \EE\left[ \left(\int_{0}^T \left(\tfrac{1}{2} | u^\theta_t(X_t^\theta)|^2 + f_t(X_t^\theta) \right)  dt + g(X_T^v)\right) \right. \\
    & \quad \times \left. \left(\int_0^T \partial_\theta u_t^\theta(X^{v}_t)\cdot dW_t + 
    \int_0^T (v_t(X_t^v)-u^\theta(X_t^v)) \cdot \partial_\theta u_t^\theta(X^{v}_t)\cdot dW_t \right) M(u^\theta,v)\right].
\end{aligned}
\end{equation}
Because \eqref{eq:grad:1} holds for any $v$, we can now evaluate it at $v = u^\theta$. Since $M(u^\theta,u^\theta)=1$, this gives~\eqref{eq:grad}. 
\end{proof}

\begin{proof}[Proof of Proposition~\ref{th:fine:tune}] It is well-known~\citep{leonard2014survey,chen2014relation} that the SOC problem specified in the proposition can be cast into solving the pair of partial differential equations
\begin{align}
    \label{eq:obj:pde:mu}
        \partial_t \mu_t &= - \nabla\cdot (b_t \mu_t) - \nabla \cdot(D_t \nabla \phi_t \mu_t) + \tfrac12 \nabla \cdot(D_t \nabla \mu_t), && \mu_0 = \delta_0\\
        \label{eq:obj:pde:phi}
        \partial_t \phi_t &= - b_t \cdot \nabla\phi_t - \tfrac12 \nabla \phi_t \cdot D_t \nabla \phi_t - \tfrac12 \nabla \cdot(D_t \nabla \phi_t),  && \phi_T = r
\end{align}
where $D_t = D_t^T= \sigma_t \sigma_t^T$,  $\mu_t$ is the distribution of $X^u_t$, and the potential $\phi_t$ gives the optimal control via $u_t = \sigma_t^T \nabla \phi_t$. We also know that the distribution $\nu_t$ of $Y_t$ solves the Fokker-Planck equation
\begin{equation}
    \label{eq:y:pde}
        \partial_t \nu_t = - \nabla\cdot (b_t \nu_t)  + \tfrac12 \nabla \cdot(D_t \nabla \nu_t), \qquad \nu_0 = \delta_0
\end{equation}
We will prove that $X^u_T \sim \mu$ by establishing that $\mu_t(dx) = Z^{-1}e^{\phi_t(x)} \nu_t(dx)$ since this will imply that $\mu_T = \mu$ because $\nu_T = \nu$ by definition and $\phi_T = r$ by construction, so that $\mu_T = Z^{-1}e^{\phi_T} \nu_T = Z^{-1}e^{r}\nu = \mu$. Let $\hat \mu_t(dx) = Z^{-1}e^{\phi_t(x)} \nu_t(dx)$. Then we have 
\begin{equation}
    \label{eq:obj:pde:mu:nu}
    \begin{aligned}
        \partial_t \hat\mu_t &= Z^{-1}\left(\partial_t \phi_t e^{\phi_t} \nu_t + e^{\phi_t} \partial_t \nu_t\right),\\
        - \nabla\cdot (b_t \hat\mu_t) &= Z^{-1}\left(- e^{\phi_t} \nabla\cdot (b_t \nu_t) - b_t\cdot \nabla \phi_t e^{\phi_t} \nu_t\right),\\ 
        - \nabla \cdot(D_t \nabla \phi_t \hat\mu_t) & = Z^{-1}\left(- e^{\phi_t}  \nabla \cdot(D_t \nabla \phi_t \nu_t) - e^{\phi_t}\nabla \phi_t \cdot D_t \nabla \phi_t  \nu_t\right)\\
        & = Z^{-1}\left(- e^{\phi_t}  \nabla \cdot(D_t \nabla \phi_t ) \nu_t -  e^{\phi_t}  \nabla \phi_t \cdot D_t \nabla \nu_t- e^{\phi_t}\nabla \phi_t \cdot D_t \nabla \phi_t  \nu_t\right),\\
        \tfrac12 \nabla \cdot(D_t \nabla \hat\mu_t) & = Z^{-1}\left(  \tfrac12 e^{\phi_t} \nabla \cdot(D_t \nabla \nu_t) + e^{\phi_t}\nabla \phi_t \cdot D_t \nabla \nu_t \right.\\
        & \left. \quad + \tfrac12e^{\phi_t} \nabla \phi_t \cdot D_t \nabla \phi_t \nu_t + \tfrac12 e^{\phi_t} \nabla \cdot(D_t \nabla \phi_t) \nu_t \right).
\end{aligned}
\end{equation}
Inserting these expressions in the left-hand side and the right-hand side of~\eqref{eq:obj:pde:mu} and using~\eqref{eq:y:pde}, several terms cancel we are left with 
\begin{equation}
    \label{eq:pde:mu:nu:2}
    \partial_t \phi_t e^{\phi_t} \nu_t  = - b_t\cdot \nabla \phi_t e^{\phi_t} \nu_t  - \tfrac12\nabla \phi_t \cdot D_t \nabla \phi_t e^{\phi_t} \nu_t - \tfrac12 e^{\phi_t}  \nabla \cdot(D_t \nabla \phi_t ) \nu_t.
\end{equation}
If we divide both sides of this equation by $e^{\phi_t} \nu_t$, we recover~\eqref{eq:obj:pde:phi}. This shows that $\hat\mu_t(dx) = Z^{-1}e^{\phi_t(x)} \nu_t(dx)$ is indeed a solution to the PDE~\eqref{eq:obj:pde:mu}. To show that it is the solution, it remains to establish that it satisfies the initial condition in~\eqref{eq:obj:pde:mu}. To this end, notice first that, since $\nu_0 = \delta_0$, we have 
\begin{equation}
    \label{eq:mu:at0}
    \hat \mu_0(dx) = Z^{-1} e^{\phi_0(x)} \nu_0(dx) = Z^{-1} e^{\phi_0(0)} \delta_0(dx)
\end{equation}
Second, since $\hat \mu_t$ satisfies~\eqref{eq:obj:pde:mu}, we must have $\int_{\R^d} \mu_t(dx) = 1$ for all $t\in [0,1]$. As a result, we conclude that $e^{\phi_0(0)} = Z$, which  means that $\hat \mu_0 = \delta_0 = \mu_0$. Since the solution pair $(\mu_t,\phi_t)$ to~\eqref{eq:obj:pde:mu}-\eqref{eq:obj:pde:phi}  is unique, we must have $\mu_t = \hat \mu_t = Z^{-1} e^{\phi_t} \nu_t $ and hence $\mu_T = Z^{-1} e^{\phi_T} \nu_T =  Z^{-1} e^{r} \nu$.
\end{proof}

Note that it is key that $\mu_0 = \nu_0 = \delta_0$ (more generally $\delta_{x_0}$ for some $x_0\in \R^d$). If the base distribution used to generate initial data in the SDEs~\eqref{eq:sde} and~\eqref{eq:ref:Y} are not atomic at $x=0$, the statement of Proposition~\ref{th:fine:tune} does not hold anymore, because the second equality in~\eqref{eq:mu:at0} fails. That is, our framework only allows to fine-tune generative models that use a Dirac delta distribution as base distribution.

\begin{proof}[Proof of Proposition~\ref{th:fine:tune:2}]
    By direct application of Girsanov theorem, we have
\begin{equation}
    \label{eq:gir:proof}
    \E_{X^u} \left[h(X^u_T) M_r(u)\right] = \E_{Y}\left[ h(Y_T)e^{r(Y_T)}\right] = \int_{\R^d} h(x) e^{r(x)} \nu(dx),
\end{equation}
where $Y_t$ solves~\eqref{eq:ref:Y} and we used $Y_{T} \sim \nu$ to get the second equality. Multiplying both sides of~\eqref{eq:gir:proof} by $Z^{-1}$ we deduce 
\begin{equation}
    \label{eq:gir:proof2}
    Z^{-1} \E_{X^u} \left[ h(X^u_T) M_r(u)\right] = Z^{-1} \int_{\R^d}  h(x) e^{r(x)} \nu(dx)  = \int_{\R^d}  h(x) \mu(dx),
\end{equation}
which gives the first equation in~\eqref{eq:expect:ft}.
Setting $h=1$ in \eqref{eq:gir:proof} we deduce that
\begin{equation}
    \label{eq:gir:proof2b}
    \E_{X^u} \left[ M_r(u)\right] = \int_{\R^d}  e^{r(x)} \nu(dx)  = Z
\end{equation}
which gives the second equation in~\cref{eq:expect:ft}.  To establish that  $M_r(u)=Z$ \textit{iff} $u$ is the optimal control minimizing the SOC problem specified in Proposition~\ref{th:fine:tune}, notice that $X^u_T\sim \mu$ \textit{iff} $u$ is this optimal control.  Assuming that this is the case, the first equation in~\cref{eq:expect:ft} implies that
\begin{equation}
    \label{eq:gir:proof3}
    \E_{X^u}\left[ h(X^u_T)\right] = Z^{-1} \E_{X^u} \left[ h(X^u_T) M_r(u)\right] 
\end{equation}
for all suitable test function~$h$. This can only hold if $M_r(u) = Z$.
\end{proof}

\section{Fourier Representation of the $\varphi^4$ Model}
\label{app:phi_4}

We define the discrete Fourier transform with the following:
\begin{equation}
    \label{eq:fourier}
    \hat \varphi (k ) = L^{-d/2} \sum_a e^{2i\pi k\cdot a /L } \varphi(a)\quad \Leftrightarrow \quad \varphi (a ) = L^{-d/2} \sum_k e^{-2i\pi k\cdot a /L } \hat \varphi(k)
\end{equation}
where $a,k \in [0,\ldots, L-1]^d$, we can write the energy \eqref{eq:energy:phi4:0} as
\begin{equation}
    \label{eq:energy:phi4:0:F}
    E_0(\varphi) = \hat E_0(\hat \varphi) \equiv\frac12\sum_{k} \hat M(k)|\hat \varphi(k)|^2, \qquad \hat M(k) = 2\alpha \left(d-\sum_{\hat e }\cos (2\pi k\cdot \hat e /L)\right) +  \beta_0,
\end{equation}
where $\hat e$ denotes the $d$ basis vectors on the lattice.   Therefore $\rho_0$ is a Gaussian density with covariance
\begin{equation}
\label{eq:cov:phi4:0:F}
    \int_{\R^{L^d}} \varphi(a) \varphi(b) \rho_0(\phi) d\phi = C(a-b), \qquad C(a) = L^{-d/2} \sum_k e^{-2i\pi k\cdot a /L } \hat M^{-1}(k).
\end{equation}

Let the inverse discrete Fourier transform defined by
\begin{equation}
    \label{eq:fourier:inv}
    \varphi^0_t (a ) = L^{-d/2} \sum_k e^{-2i\pi k\cdot a /L } \hat \varphi^0_t(k)
\end{equation}
where $\varphi^0_t(k)$ solves 
\begin{equation}
\label{eq:ref_process_phi_4:2}
    d\hat \varphi^0_t(k) = \hat M^{-1/2}(k) d\hat W_t(k), \qquad \hat \varphi^0_{t=0}(k) = 0
\end{equation}
in which $\hat W$ is the Fourier transform of a Wiener process $W$ with covariance $\E[W_t(a)dW_s(b) ]= \delta_{a,b} \min(t,s)$. Then $\varphi^0_{t=1} \sim \rho_0$. Let also
\begin{equation}
    \label{eq:U:phi4}
    \begin{aligned}
        & U(\varphi) = \frac12 (\beta-\beta_0) \sum_a |\varphi(a)|^2 + \frac14 \gamma \sum_a  |\varphi(a)|^4\\
        = \ & \hat U(\hat \varphi) = \frac12 (\beta-\beta_0) \sum_k |\hat\varphi(k)|^2 + \frac14 \gamma \sum_a  \left|L^{-d/2} \sum_k e^{-2i\pi k\cdot a /L } \hat \varphi(k)\right|^4
    \end{aligned} 
\end{equation}
where $\varphi$ and $\hat\varphi$ are Fourier transform pairs as defined in \eqref{eq:fourier}: the last term can be implemented via $\sum_a (\text{ifft}(\hat \varphi))^4(a)$.
Then, we can derive the Fourier representation of the optimal control problem with objective~\eqref{eq:objective_phi_4} and controlled process~\eqref{eq:control_process_phi_4}: 

\begin{equation}
\label{eq:objective_phi_4_fourier}
    \min_{\hat u} \E [ \frac{1}{2}  
    \int_0^1  \sum_k |\hat u(t,k,\hat\varphi^u_t )|^2 dt + \hat U(\hat \varphi^u_T  )]
\end{equation}
where $\hat \varphi^u_t(k)$ solves the controlled process
\begin{equation}
\label{eq:control_process_phi_4_fourier}
    d \hat \varphi^u_t(k)  = \hat M^{-1}(k)\hat u (t,k,\hat\varphi^u_t) dt  +\hat M^{-1/2}(k) d\hat W_t(k), \qquad \hat \varphi^u_{t=0} (k)  = 0
\end{equation}
Then by Proposition~\ref{th:follmer}, if we use the optimal control in~\eqref{eq:control_process_phi_4}, we have that 
\begin{equation}
    \label{eq:fourier:2}
    \varphi^u_{t=1} (a ) = L^{-d/2} \sum_k e^{-2i\pi k\cdot a /L } \hat \varphi^u_{t=1}(k)
\end{equation}
sample the PDF~\eqref{eq:phi4:pdf}.

\paragraph{Sampling using the Langevin SDE:} To obtain the ground-truth samples from the $\varphi^4$ model, one option is to use the SDE
\begin{equation}
\label{eq:clang_phi_4}
    d \hat \varphi_t(k)  = -\hat M(k) \hat \varphi_t(k) dt - (\beta-\beta_0) \hat \varphi_t(k) dt - \gamma \widehat{\varphi_t^3} (k) dt+ \sqrt{2} d\hat W_t(k).
\end{equation}
where we denote
\begin{equation}
    \label{eq:pseudo}
    \widehat{\varphi_t^3} (k) = L^{-d/2} \sum_a e^{2i\pi k\cdot a /L }\left(L^{-d/2} \sum_k e^{-2i\pi k\cdot a /L } \hat \varphi_t(k)\right)^3 
\end{equation}
which can be implemented via $\text{fft}( (\text{ifft}(\hat\varphi_t))^3)$. This SDE may be quite stiff, however, a problem that can be alleviated by changing the mobility and using instead
\begin{equation}
\label{eq:clang_phi_4:2}
    d \hat \varphi_t(k)  = -\hat \varphi_t(k) dt - (\beta-\beta_0) \hat M^{-1} (k) \hat \varphi_t(k) dt  - \gamma \hat M^{-1} (k)\widehat{\varphi_t^3} (k) dt + \sqrt{2} \hat M^{-1/2}(k) d\hat W_t(k).
\end{equation}
The discretized version of this equation reads
\begin{equation}
\label{eq:clang_phi4:d}
\begin{aligned}
    \hat \varphi_{t_{n+1}}(k)  & = \hat \varphi_{t_n}(k) - \mathit{\Delta}{t_n}  \left( \hat \varphi_{t_n}(k) + (\beta-\beta_0) \hat M^{-1} (k) \hat \varphi_{t_n}(k) +\gamma \hat M^{-1} (k)\widehat{\varphi_{t_n}^3} (k) \right) \\
    &\quad + \sqrt{2\mathit{\Delta}{t_n}}\hat M^{-1/2}(k)  \hat\eta_n(k),     
\end{aligned}
\end{equation}
where $\hat \eta_n$ is the Fourier transform of $\eta_n \sim N(0,\text{Id})$.

\section{Additional Numerical Examples}
\label{app:add_expeiments}

\subsection{Linear Ornstein-Uhlenbeck Example}
\label{sec:ou:lin}

We consider the SOC problem in~\eqref{eq:obj} with $f = 0$, $g(x) = \gamma\cdot x$, $\sigma_t= \sigma = cst$, $\lambda = 1$ and $ b_t(x) = Ax$, where $\gamma\in \R^d$ and $\sigma,A\in \R^d\times \R^d$. This example was proposed by~\citet{nüsken2023solvinghighdimensionalhamiltonjacobibellmanpdes} and its optimal control can be calculated analytically:
\begin{equation*}
    u^\ast_t(x) = - \sigma_0^{\top} \exp(A^{\top} (T-t)) \gamma.
\end{equation*}
We set $d = 20$ with initial samples $X_0 \sim \mathcal{N} (0, \frac12\text{Id})$. The control $u_t(x)$ is parameterized using a fully connected MLP with 4 layers of 128 hidden dimensions, initialized using PyTorch defaults. Optimization uses Adam \citep{kingma2014adam} with learning rate $3\cdot 10^{-4}$ and cosine annealing. For comparison, we implement the vanilla method requiring SDE differentiation under identical conditions. Performance is evaluated using the squared $L^2$ error between learned and true controls.
\begin{equation}
\label{eq:l2error}
    E =   \int_{0}^{T} \mathbb{E}\left[\left|u_t(X^{\ast}_t) - u^\ast_t(X^{\ast}_t) \right|^2 \right] dt
\end{equation}
where $X^{\ast}$ is generated with the optimal control $u^\ast$ and the expectation is estimated via Monte-Carlo sampling over $256$ trajectories.

The numerical results are shown in Figure~\ref{fig:linear_OU} . Compared with the vanilla method, our method achieves comparable  accuracy faster and at lower memory cost (see Table~\ref{table:linear_OU} for a detailed comparison in terms of memory cost and computational time). 
\begin{figure}[t]
    \centering
\includegraphics[width=0.48\linewidth]{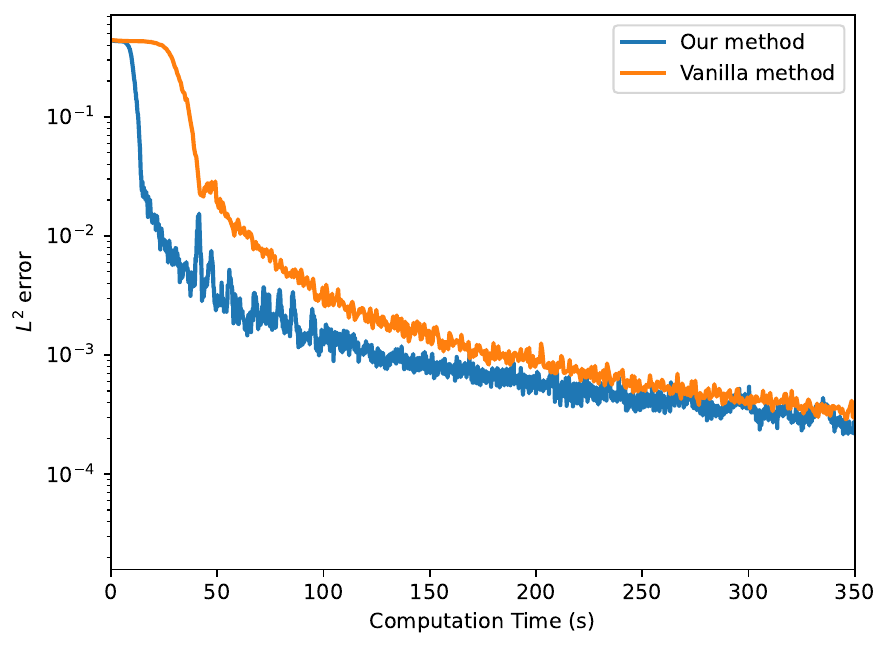}
\includegraphics[width=0.48\linewidth]{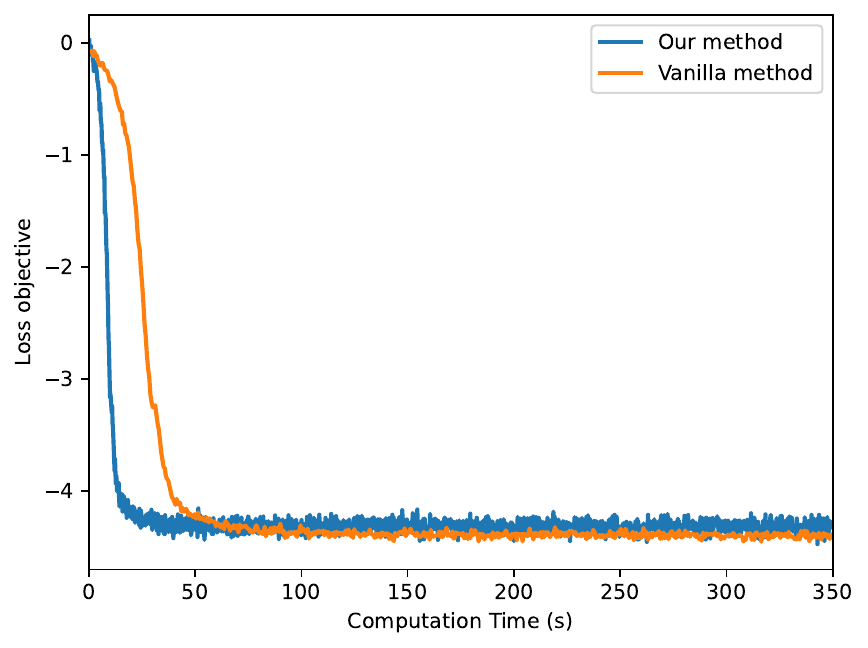}
\caption{\textbf{Linear Ornstein-Uhlenbeck Example}: Our method outperforms the vanilla method in terms of convergence rate measured by the squared $L^2$ error  (top panel) and the training loss (bottom panel). }
\label{fig:linear_OU}
\end{figure}

\begin{table}
\centering
\setlength{\tabcolsep}{2.5pt}
\begin{tabular}{@{} l r r r r@{}}\toprule
Model  
& \tcen{Memory Cost (GB)} & \tcen{Back-Prop Runtime (s)}\\
\cmidrule(r){1-1}\cmidrule(lr){2-2} \cmidrule(lr){3-3} 

Our Method & 
$\mathbf{0.962}$ {\tiny $\pm 0.001$} & 
$\mathbf{0.003}$ {\tiny $\pm 0.000$}\\

Vanilla Method & 
$2.590$ {\tiny $\pm 0.001$} & 
$0.177$ {\tiny $\pm 0.006$}\\
\bottomrule
\end{tabular}
\caption{\textbf{Linear Ornstein-Uhlenbeck Example}: Comparison between our method and the vanilla method in terms of the GPU memory usage and runtime for one back-propagation pass. Here, we use a mini-batch size of $5$k and $256$ time steps. }
\label{table:linear_OU}
\end{table}

\subsection{Quadratic Ornstein-Uhlenbeck Example}
\label{sec:ou:quad}

\begin{table}
\centering
\setlength{\tabcolsep}{2.5pt}
\begin{tabular}{@{} l r r r r@{}}\toprule
Model  
& \tcen{Memory Cost (GB)} & \tcen{Back-Prop Runtime (s)}\\
\cmidrule(r){1-1}\cmidrule(lr){2-2} \cmidrule(lr){3-3} 

Our Method & 
$\mathbf{1.260}$ {\tiny $\pm 0.001$} & 
$\mathbf{0.0034}$ {\tiny $\pm 0.0003$}\\

Vanilla Method & 
$3.590$ {\tiny $\pm 0.001$} & 
$0.195$ {\tiny $\pm 0.003$}\\
\bottomrule
\end{tabular}
\caption{\textbf{Quadratic Ornstein-Uhlenbeck Example}: Comparison between our method and the vanilla method in terms of the GPU memory usage and runtime for one back-propagation pass. Here, we use a mini-batch size of $512$ and $256$ time steps. }
\label{table:quadratic_OU}
\end{table}

\begin{figure}[t]
    \centering
\includegraphics[width=0.48\linewidth]{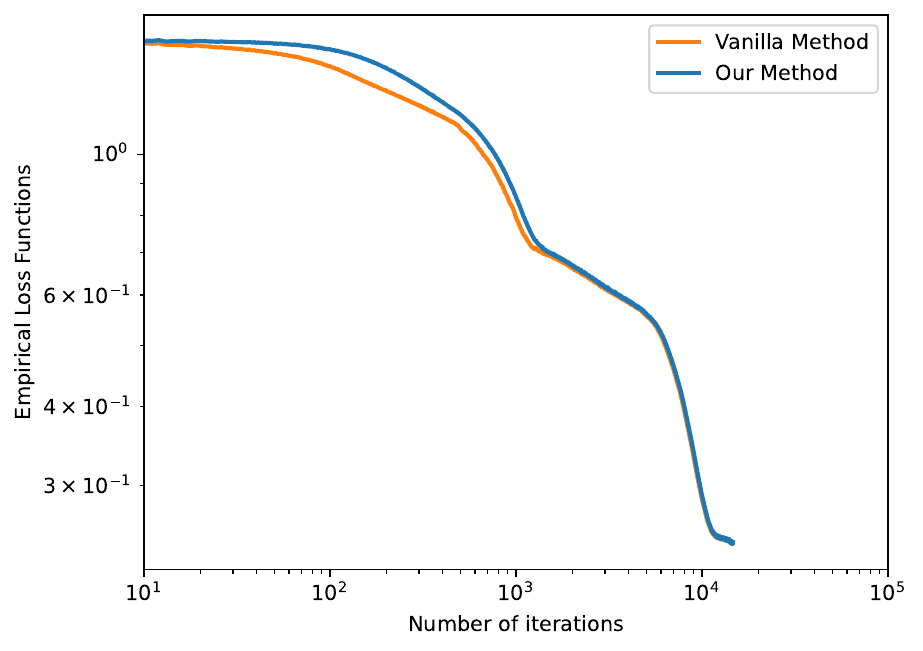}
\includegraphics[width=0.48\linewidth]{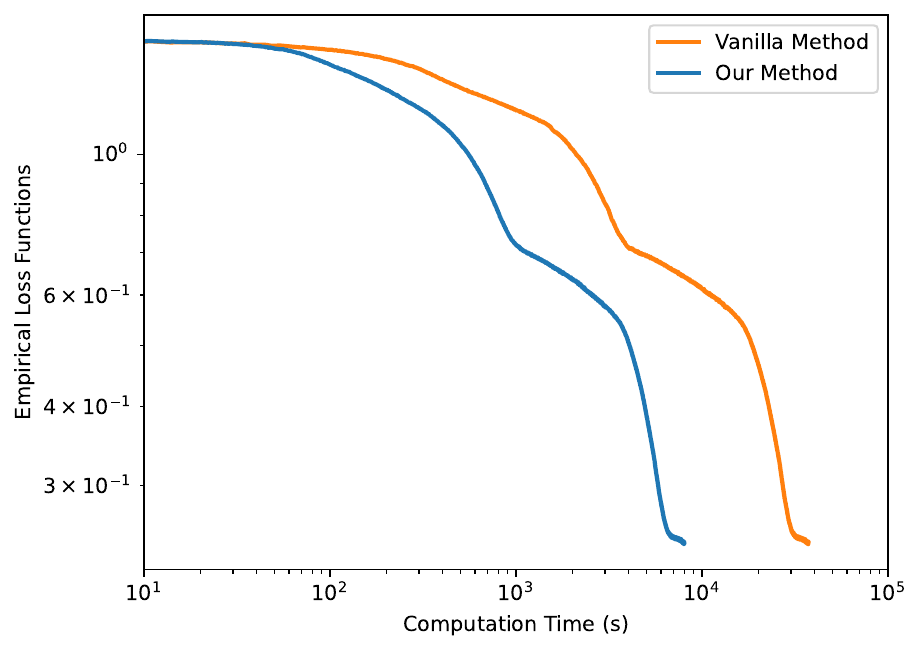}
\caption{\textbf{Quadratic Ornstein-Uhlenbeck Example}: $L^2_2$ error in terms of the number of training iterations (left panel) and the GPU compute time (right panel).} 
\label{fig:quadratic_OU_hard}
\end{figure}

Next, we consider a more complicated case where  the SOC objective includes a quadratic running cost: $f(x) = x^T P x$, $g(x) = x^T Q x$, $b_t(x) = Ax$, $\sigma_t = \sigma_0$, where $P,Q,A\in \R^d\times \R^d$. This type of SOC problems are often referred to as linear quadratic regulator (LQR) and they have closed-form analytical solution \citep[Chapter 7]{van2007stochastic}:
\begin{equation*}
    u^*_t(x) = - 2 \sigma_0^{\top} F_t x, 
\end{equation*}
where $F_t$ solves the Riccati equation
\begin{equation*}
    \frac{dF_t}{dt} + A^{\top} F_t + F_t A - 2 |\sigma_0^{\top} F_t|^2 + P = 0
\end{equation*}
with the final condition $F_T = Q$.
We consider this example investigated by \citet{domingo2023stochastic} with the following configuration:
\begin{itemize}[leftmargin=0.15in,noitemsep,topsep=0pt]
    \item[] $d=400$, $A = I$, $P = I$, $Q = 0.5 I$, $\sigma_0 = I$, $\lambda = 1$, $T= 10$, $X_0 \sim N(0,\tfrac12 \text{I})$.  
\end{itemize}
However, compared to \citet{domingo2023stochastic}, we scale the dimensions from $d = 20$ to $d = 400$ and time horizon from $T = $ to $T = 10$, significantly increasing the task complexity. The neural network parameterization and initialization for $u_t(x)$ follow Sec.~\ref{sec:ou:lin}. Table~\ref{table:quadratic_OU} compares memory consumption and computational cost between our method and the vanilla approach, while Figure~\ref{fig:quadratic_OU_hard} contrasts their $L^2$ accuracy and training time. With equal computation time, our method achieves better $L^2$ accuracy and maintains identical learning curves but with faster execution. Notably, our method demonstrates superior scalability as it converges in under 3 hours ($<10000$ seconds) while the vanilla approach requires over 14 hours ($>50000$ seconds).

\end{document}